\definecolor{mydarkblue}{rgb}{0,0.08,0.45} 
\newtheorem{theorem}{Theorem}
\newtheorem*{theorem*}{Theorem}%
\newcommand\Ba{\bm{a}}
\newcommand\Bb{\bm{b}}
\newcommand\Bv{\bm{v}}
\newcommand\Bw{\bm{w}}
\newcommand\Bx{\bm{x}}
\newcommand\By{\bm{y}}
\newcommand\Bz{\bm{z}}
\newcommand\BI{\bm{I}}
\newcommand\Bth{\bm{\theta}}
 \newcommand{\dR}{\mathbb{R}}
 \newcommand{\rd}{\mathrm{d}}
 \newcommand{\cN}{\mathcal{N}}
 \newcommand{\cX}{\mathcal{X}}
\newcommand{\cY}{\mathcal{Y}}
 \newcommand{\Rd}{\mathrm{d}}
\newcommand\sign{\mbox{sign}}
\newcommand\EXP{\mathrm{E}}
\DeclareMathOperator*{\argmin}{arg\,min}
\renewcommand{\leq}{\leqslant}
\renewcommand{\geq}{\geqslant}
\def\EE{\mbox{\bf E}}
\def\di{\bm{\nabla \cdot}}
\def\lap{\bm{\nabla}^2}
\def\nab{\bm{\nabla}}
\title{Coulomb GANs: Provably Optimal Nash Equilibria via Potential Fields}
\author{
 Thomas Unterthiner$^1$
 \And
 Bernhard Nessler$^1$
 \And
 Calvin Seward$^{1,2}$
 \And
 Günter Klambauer$^1$
 \And
 Martin Heusel$^1$
 \And
 Hubert Ramsauer$^1$
 \And
 Sepp Hochreiter$^1$\\
 \And
  \\
 $^1$LIT AI Lab \& Institute of Bioinformatics, Johannes Kepler University Linz, Austria\\
$^2$Zalando Research, Mühlenstraße 25, 10243 Berlin, Germany\\
\texttt{\{unterthiner,nessler,seward,klambauer,mhe,ramsauer,hochreit\}@bioinf.jku.at}\\
}
\begin{document}

\maketitle

\begin{abstract}
Generative adversarial networks (GANs) evolved into one
of the most successful unsupervised techniques for generating
realistic images.
Even though it has recently been shown that GAN training converges, GAN models
often end up in local Nash equilibria that are associated with mode collapse or
otherwise fail to model the target distribution.
We introduce Coulomb GANs, which pose the GAN learning problem
as a potential field,
where generated samples are attracted to training set samples but
repel each other.
The discriminator learns a potential field while the generator
decreases the energy by moving its samples
along the vector (force) field determined by the gradient of the
potential field.
Through decreasing the energy, the GAN model learns
to generate samples according to the whole target
distribution and does not only cover some of its modes.
We prove that Coulomb GANs possess only one
Nash equilibrium which is optimal in the sense that
the model distribution equals the target distribution.
We show the efficacy of Coulomb GANs on LSUN bedrooms, CelebA faces, CIFAR-10
and the Google Billion Word text generation.
\end{abstract}

\section{Introduction}
\label{sec:introduction}

Generative adversarial networks (GANs) \citep{Goodfellow:14nips}
excel at constructing realistic images
\citep{Radford:15,Ledig:16,Isola:16,Arjovsky:17,Berthelot:17}
and text \citep{Gulrajani:17}.
In GAN learning, a discriminator network guides the learning of another, generative network.
This procedure can be considered as a game between the generator
which constructs synthetic data
and the discriminator which separates
synthetic data from training set data \citep{Goodfellow:17tutorial}.
The generator's goal is to construct data which the discriminator cannot
tell apart from training set data.
GAN convergence points are local Nash equilibria. At these local Nash equilibria
neither the discriminator nor the generator can locally improve its
objective.

Despite their recent successes, GANs have several problems.
First (I), until recently it
was not clear if in general gradient-based GAN learning could
converge to one of the local Nash
equilibria \citep{Salimans:16,Goodfellow:14criteria,Goodfellow:14nips}.
It is even possible to construct counterexamples \citep{Goodfellow:17tutorial}.
Second (II), GANs suffer from ``mode collapsing'',
where the model generates samples only in certain regions which are called modes.
While these modes contain realistic samples,
the variety is low and only a few prototypes are generated.
Mode collapsing is less likely
if the generator is trained with batch normalization,
since the network is bound to create a certain variance among its
generated samples within one batch \citep{Radford:15,Chintala:16}.
However batch normalization
introduces fluctuations of normalizing
constants which can be harmful \citep{Klambauer:17,Goodfellow:17tutorial}.
To avoid mode collapsing without batch normalization,
several methods have been proposed
\citep{Che:17,Metz:16,Salimans:16}.
Third (III), GANs cannot assure that the density of training samples is
correctly modeled by the generator.
The discriminator only tells the generator whether a
region is more likely to contain samples from the training set or synthetic samples.
Therefore the discriminator can only distinguish the support
of the model distribution
from the support of the target distribution.
Beyond matching the support of distributions, GANs with proper objectives
may learn to locally align model and target densities
via averaging over many training examples.
On a global scale, however,
GANs fail to equalize model and target densities.
The discriminator does not inform the generator globally where
probability mass is missing. 
Consequently, standard GANs are not assured to capture the global sample
density and are prone
to neglect large parts of the target
distribution. The next paragraph gives an example of this.
Fourth (IV), the discriminator of GANs may forget previous modeling
errors of the generator which then may reappear, a property that leads
to oscillatory behavior instead of convergence \citep{Goodfellow:17tutorial}.

Recently, problem (I) was solved by proving that GAN learning does indeed
converge when discriminator and generator are learned using a two time-scale
learning rule \citep{Heusel:17}.
Convergence means that the expected SGD-gradient of both the
discriminator objective and the generator objective are zero. Thus, neither the
generator nor the discriminator can locally improve, i.e., learning has reached a local
Nash equilibrium.
However, convergence alone does not guarantee
good generative performance. It is possible to converge to sub-optimal
solutions which are local Nash equilibria.
Mode collapse is a special case of a
local Nash equilibrium associated with sub-optimal generative performance.
For example, assume a two mode real world
distribution where one mode contains too few and the other mode too many generator samples. If
no real world samples are between these two distinct modes, then the discriminator penalizes to
move generated samples outside the modes. Therefore the generated samples cannot be correctly
distributed over the modes. Thus, standard GANs cannot capture the global sample density such that
the resulting generators are prone to neglect large parts of the real world distribution. A more detailed
example is listed in the Appendix in Section~\ref{sec:modecollapse-example}.

In this paper, we introduce a novel GAN model, the Coulomb GAN, which
has only one Nash equilibrium. We are later going to show that this
Nash equilibrium is optimal, i.e., the model
distribution matches the target distribution.
We propose Coulomb GANs to avoid the
GAN shortcoming (II) to (IV) by using a potential field created by point charges
analogously to the electric field in physics.
The next section will introduce the idea of learning in a potential field and prove
that its only solution is optimal. We will then show how learning the discriminator
and generator works in a Coulomb GAN and discuss the assumptions needed
for our optimality proof. In Section~\ref{sec:experiments} we will then see that
the Coulomb GAN does indeed work well in practice and that the samples it produces
have very large variability and appear to capture the original distribution very well.

\paragraph{Related Work.}
Several GAN approaches have been suggested for bringing the target and model distributions in
alignment using not just local discriminator information:
Geometric GANs combine samples via a linear
support vector machine which uses the discriminator outputs as samples,
therefore they are much more robust to mode collapsing \citep{Lim:17}.
Energy-Based GANs \citep{Zhao:16} and their later improvement BEGANs \citep{Berthelot:17}
optimize an energy landscape based on auto-encoders.
McGANs match mean and covariance of synthetic and target data, therefore
are more suited than standard GANs to approximate
the target distribution \citep{Mroueh:17}.
In a similar fashion, Generative Moment Matching Networks \citep{Li:15}
and MMD nets \citep{Dziugaite:15} directly optimize a generator network to match a
training distribution by using a loss function based on the maximum mean discrepancy (MMD) criterion \citep{Gretton:12}.
These approaches were later expanded to include an MMD criterion with learnable kernels and discriminators \citep{Li:17}.
The MMD criterion that these later approaches optimize has a form similar
to the energy function that Coulomb GANs optimize (cf. Eq.~\eqref{eq:objectiveF}).
However, all MMD approaches end up using either Gaussian or Laplace
kernels, which are not guaranteed to find the optimal solution where the
model distribution matches the target distribution.
In contrast, the
Plummer kernel which is employed in this work has been shown to lead
to the optimal solution \citep{Hochreiter:05kernel}.
We show that even a simplified version of the Plummer kernel, the
low-dimensional Plummer kernel, ensures that gradient descent
convergences to the optimal solution as stated by Theorem~\ref{th:convergence}.
Furthermore, most MMD GAN approaches use the MMD directly as loss
function though the number of possible samples in a mini-batch is limited.
Therefore MMD approaches face a sampling
problem in high-dimensional spaces.
The Coulomb GAN instead learns a discriminator network that gradually
improves its approximation of the potential field
via learning on many mini-batches.
The discriminator network also tracks the slowly
changing generator distribution during learning.
Most importantly however, our approach is,
to the best of our knowledge, the first one for which
optimality, i.e., ability to perfectly learn a target distribution, can be proved.

The use of the Coulomb potential for learning is not new.
Coulomb Potential Learning was proposed to store arbitrary many patterns in a potential
field with perfect recall and without spurious patterns \citep{Perrone:95}.
Another related work is the
Potential Support Vector Machine (PSVM), which minimizes Coulomb potential differences
\citep{Hochreiter:01cltech,Hochreiter:03c}.
\citet{Hochreiter:05kernel} also used a potential function
based on Plummer kernels for optimal unsupervised learning, on which we base our work on Coulomb GANs.

\begin{figure}
\centering
\includegraphics[width=\textwidth]{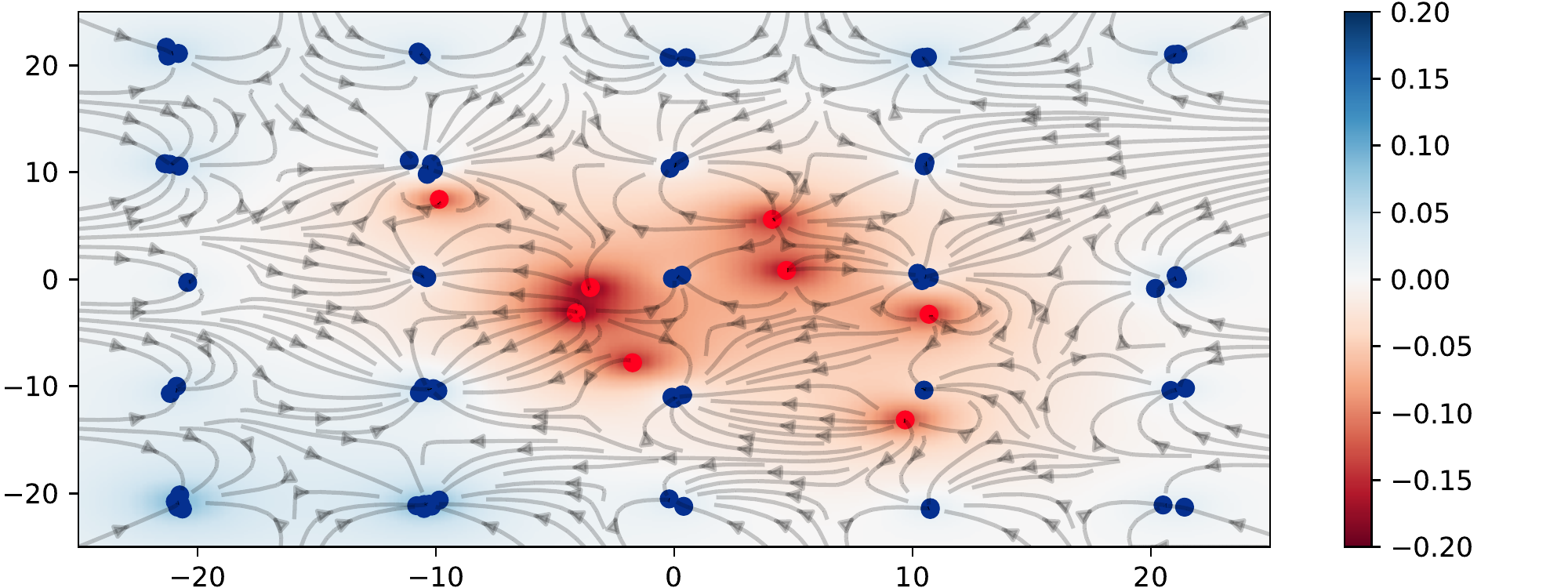}
\caption{The vector field of a Coulomb GAN.
The basic idea behind the Coulomb GAN: true samples (blue) and
generated samples (red) create a potential field (scalar field).
Blue samples act as sinks that attract the red samples, which
repel each other. The superimposed vector field
shows the forces acting on the generator samples
to equalize potential differences, and the background color
shows the potential at each position. Best viewed in color.}
\label{fig:field}
\end{figure}

\section{Coulomb GANs}
\label{sec:CoulombGANs}
\subsection{General Considerations on GANs}
We assume data samples $\Ba \in \dR^m$ for
a model density $p_x(.)$ and a target density $p_y(.)$.
The goal of GAN learning is to modify the
model in a way to obtain $p_x(.)=p_y(.)$.
We define the {\bf difference of densities}
$\rho(\Ba) = p_y(\Ba) -  p_x(\Ba)$ which should be pushed toward zero
for all $\Ba \in \dR^m$ during learning.
In the GAN setting, the discriminator $D(\Ba)$ is a function $D:\dR^m \to \dR$
that learns to discriminate between generated
and target samples and predicts how
likely it is that $\Ba$ is sampled from the target distribution.
In conventional GANs, $D(\Ba)$ is usually optimized to approximate the probability of seeing a
target sample, or $\rho(\Ba)$ or some similar function.
The generator $G(\Bz)$ is a continuous function $G: \dR^n \to \dR^m$ which maps some
$n$-dimensional random variable $\Bz$ into the space of target
samples.
$\Bz$ is typically sampled from a multivariate Gaussian or Uniform distribution.

In order to improve the generator, a GAN uses the gradient of the discriminator
$\nab_{a} D(\Ba)$ with respect to the discriminator input $\Ba=G(\Bz)$ for learning.
The objective of the generator is a scalar function $D(G(\Bz))$,
therefore the gradient of the objective function is just a scaled
version of the gradient $\nab_{a} D(\Ba)$ which would then propagate further
to the parameters of $G$.
This gradient $\nab_{a} D(\Ba)$ tells the generator
in which direction $\rho(\Ba)$ becomes larger, i.e., in which direction
the ratio of target examples increases.
The generator changes slightly
so that $\Bz$ is now mapped to a
new $\Ba'=G'(\Bz)$, moving the sample generated by $\Bz$ a little bit towards the direction where $\rho(\Ba)$ was larger,
i.e., where target examples were more likely.
However, $\rho(\Ba)$ and its derivative only take
into account the local neighborhood of
$\Ba$, since regions of the sample space that are
distant from $\Ba$ do not have much influence on $\rho(\Ba)$.
Regions of data space that have strong support in $p_y$ but not
in $p_x$ will not be noticed by the generator via discriminator gradients.
The restriction to local environments hampers GAN learning significantly \citep{Arjovsky:17Towards, Arjovsky:17}.

The theoretical analysis of GAN learning can be done at three different levels:
(1) in the space of distributions $p_x$ and $p_y$ regardless of the fact that $p_x$ is realized by $G$ and $p_z$,
(2) in the space of functions $G$ and $D$ regardless of the fact that $G$ and $D$ are typically realized
by a parametric form, i.e., as neural networks,
or (3) in the space of the parameters of $G$ and $D$.
\citet{Goodfellow:14nips} use (1) to prove convergence of GAN learning in their Proposition 2
in a hypothetical scenario where the learning algorithm operates by making small, local moves in $p_x$
space.
In order to see that level (1) and (2) should both be understood as hypothetical scenarios, remember that
in all practical implementations, $p_x$ can only be altered implicitly by making small changes to the
generator function G, which in turn can only be changed
implicitly by small steps in its parameters.
Even if we assume that the mapping from a distribution $p_x$ to the generator $G$ that induced it
exists and is unique, this mapping from $p_x$ to the space of $G$ is not continuous.
To see this, consider changing a distribution $p1_x$ to a new distribution $p2_x$ by
moving a small amount $\epsilon$ of its density to an isolated region in space where $p1_x$ has no support.
Let's further assume this region has distance $d$ to any other regions of support of $p1_x$.
By letting $\epsilon \to 0$, the distance between $p1_x$ and $p2_x$ becomes smaller, yet the distance
between the inducing generator functions $G_1$ and $G_2$
(e.g. using the supremum norm on bounded functions)
will not tend to zero
because for at least one function input $\Bz$ we have: $|G_1(\Bz) - G_2(\Bz)| \geq d$.
Because of this, we need to go further than the distribution space when analyzing GAN learning.
In practice, when learning GANs, we are restricted to small steps in parameter space, which in turn lead to small
steps in function space and finally to small steps in distribution space. But not all small steps in distribution
space can be realized this way as shown in the example above.
This causes local Nash equilibria in the function space,
because even though in distribution space it would be easy to escape by making small steps, such a step would require
very large changes in function space and is thus not realizable.
In this paper we show that Coulomb GANs
do not exhibit any local Nash equilibria in the space of the functions $G$ and $D$.
To the best of our knowledge, this is the first formulation of GAN learning that can guarantee this property.
Of course, Coulomb GANs are learned as parametrized neural networks,
and as we will discuss in Subsection~\ref{sec:coulombgans-in-practice},
Coulomb GANs are not immune to the usual issues that
arise from parameter learning, such as over- and underfitting, which can cause
local Nash Equilibria due to a bad choice of parameters.

\subsection{From Conventional GANs to Potentials}
If the density $p_x(.)$ or $p_y(.)$ approaches a Dirac
delta-distribution, gradients vanish since the density approaches
zero except for the exact location of data points.
Similarly, electric point charges are often represented by Dirac
delta-distributions, however the electric potential created by a point charge
has influence
everywhere in the space, not just locally.
The electric potential (Coulomb potential) created by the point charge $Q$ is
$\Phi_C = \frac{1}{4\pi \varepsilon_{0}} \frac{Q}{r}$,
where $r$ is the distance to the location of $Q$ and
$\varepsilon_{0}$ is the dielectric constant.
Motivated by this electric potential, we introduce a similar
concept for GAN learning: Instead of the difference of densities
$\rho(\Ba)$, we rather consider a {\bf potential function} $\Phi(\Ba)$
defined as
\begin{align}
\label{eq:potential}
\Phi(\Ba) \ &= \
\int \rho(\Bb) \ k(\Ba,\Bb) \ \Rd\Bb \ ,
\end{align}
with some kernel $k\left(\Ba,\Bb \right)$ which defines the
influence of a point at $\Bb$ onto a point at $\Ba$.
The crucial advantage of potentials $\Phi(\Ba)$ is that each point
can influence each other point in space if $k$ is chosen properly.
If we minimize this potential $\Phi(\Ba)$ we are at the same time minimizing
the difference of densities $\rho(\Ba)$:
For all kernels $k$ it holds that
if $\rho(\Bb)=0$ for all $\Bb$ then $\Phi(\Ba)=0$ for all $\Ba$.
We must still show that (i) $\Phi(\Ba)=0$ for all $\Ba$ then $\rho(\Bb)=0$ for all $\Bb$,
and even more importantly, (ii) whether a gradient optimization of $\Phi(\Ba)$
leads to  $\Phi(\Ba)=0$ for all $\Ba$. This is not the case for every kernel.
Indeed only for particular kernels $k$
gradient optimization of $\Phi(\Ba)$ leads to $\rho(\Bb)=0$ for all
$\Bb$, that is, $p_x(\Bb)=p_y(\Bb)$ for all $\Bb$
\citep{Hochreiter:05kernel} (see also Theorem~\ref{th:convergence}
below). An example for such a kernel $k$ is the one leading to the Coulomb potential $\Phi_C$ from above, where
$k\left(\Ba,\Bb \right)=\frac{1}{\|\Ba -\Bb \|}$ for $m=3$.
As we will see in the following, the ability to have samples that influence each
other over long distances, like charges in a Coulomb potential, will lead to
GANs with a single, optimal Nash equilibrium.

\subsection{GANs as Electrical Fields}
For Coulomb GANs,
the generator objective is derived from electrical field dynamics: real
and generated samples generate a potential field,
where samples of the same class (real vs.\ generated)
repel each other, but attract samples of the opposite class.
However, real data points are fixed in space, so the only samples that can
move are the generated ones.
In turn, the gradient of the potential with respect to the input
samples creates a vector field in the space of samples.
The generator can move its samples along the forces generated by this field.
Such a field is depicted in Figure~\ref{fig:field}.
The discriminator learns to predict the potential function,
in order to approximate the current potential landscape of
all samples, not just the ones in the current mini-batch.
Meanwhile, the generator learns to
distribute its samples
across the whole field
in such a way that the energy is minimized,
thus naturally avoids
mode collapse and covering the whole region of support of the data.
The energy is minimal and equal to zero only
if all potential differences are zero and the model distribution
is equal to the target distribution.

Within an electrostatic field,
the strength of the force on one particle depends on its distance
to other particles and their charges.
If left to move freely, the particles will organize
themselves into a constellation where all forces equal out and no
potential differences are present.
For continuous charge distributions, the potential field is constant
without potential differences
if charges no longer move since forces are equaled out.
If the potential field is constant, then the difference of densities $\rho$
is constant, too. Otherwise the potential field would have local bumps.
The same behavior is modeled within
our Coulomb GAN, except that real and generated
samples replace the positive and negative particles, respectively,
and that the real data points remain fixed.
Only the generated samples are allowed to move freely, in order to minimize $\rho$.
The generated samples are attracted by real samples, so they move towards them.
At the same time, generated samples should repel each other, so they
do not clump together, which would lead to mode collapsing.

Analogously to electrostatics, the potential $\Phi(\Ba)$
from Eq.~\eqref{eq:potential} gives rise to
a {\bf field} $\EE(\Ba)=  -\nab_{a} \Phi(\Ba)$.
and to an {\bf energy function} $F\left( \rho \right) = \frac{1}{2} \int \rho(\Ba) \Phi(\Ba) \Rd\Ba$.
The field $\EE(\Ba)$ applies a force on charges at $\Ba$ which pushes
the charges toward lower energy constellations.
Ultimately, the Coulomb GAN aims to make the potential $\Phi$ zero
everywhere via the field $\EE(\Ba)$, which is the negative gradient of $\Phi$.
For proper kernels $k$, it can be shown that (i) $\Phi$ can be pushed
to zero via its negative gradient given by the field and (ii) that
$\Phi(\Ba)=0$ for all $\Ba$ implies $\rho(\Ba)=0$ for all $\Ba$,
therefore, $p_x(\Ba)=p_y(\Ba)$ for all $\Ba$
\citep{Hochreiter:05kernel} (see also Theorem~\ref{th:convergence} below).

\subsubsection{Learning Process}
During learning we do not change $\Phi$ or $\rho$ directly.
Instead, the location $\Ba=G(\Bz)$ to which the random variable $\Bz$ is mapped
changes to a new location $\Ba'=G'(\Bz)$.
For the GAN optimization dynamics, we assume that generator samples $\Ba=G(\Bz)$
can move freely, which is ensured by a sufficiently complex generator.
Importantly, generator samples originating from random variables $\Bz$
do neither disappear nor are they newly created but are conserved.
This conservation is expressed by
the continuity equation \citep{Schwartz:87} that describes how the difference
between distributions $\rho(\Ba)$ changes as the particles are moving along the field,
i.e., how moving samples during the learning process changes our densities:
\begin{align}
\dot{\rho}(\Ba) \ &= \  - \ \di (\rho(\Ba) \ \Bv(\Ba))
\end{align}
for sample density difference $\rho$ and unit charges that move with ``velocity''
$\Bv(\Ba) = \sign(\rho(\Ba)) \EE(\Ba)$.
The continuity equation is crucial as it
establishes the connection between moving
samples and changing the generator density and thereby $\rho$.
The sign function of the velocity
indicates whether positive or negative charges are present at $\Ba$.
The divergence operator ``$\di$'' determines whether samples move toward
or outward of $\Ba$ for a given field.
Basically, the continuity equation says that if the generator density increases,
then generator samples must flow into the region and if the generator density decreases,
they flow outwards.
We assume that differently charged particles cancel each other.
If generator samples are moved
away from a location $\Ba$ then $\rho(\Ba)$ is increasing while $\rho(\Ba)$ is
decreasing when generator samples are moved toward $\Ba$.
The continuity equation is also obtained as a first order ODE to move particles
in a potential field \citep{Dembo:88}, therefore describes the dynamics
how the densities are changing.
We obtain
\begin{align}
\label{eq:learnrho}
\dot{\rho}(\Ba) \ &= \
- \ \sign(\rho(\Ba))
\ \di(\rho(\Ba) \ \EE(\Ba))
\ = \
- \ \di(|\rho(\Ba)| \ \EE(\Ba)) \ .
\end{align}
The density difference $\rho(\Ba)$ indicates
how many samples are locally available for being moved.
At each local minimum and local maximum $\Ba$ of $\rho$ we obtain
$\nab_{a} \rho(\Ba) = 0$.
Using the product rule for the divergence
operator, at points $\Ba$ that are minima or maxima, Eq.~\eqref{eq:learnrho} reduces to
\begin{align}
\dot{\rho}(\Ba) \ &= \
- \ \sign(\rho(\Ba))
\ \rho(\Ba) \ \di \EE(\Ba) \ .
\end{align}
In order to ensure that $\rho$ converges to zero,
it is necessary and sufficient that
$\sign (\di \EE(\Ba)) =  \sign( \rho(\Ba) )$, where $\div \Ba \rho(\Ba) = 0$,
as this condition ensures the uniform decrease of the maximal absolute density differences $|\rho(\Ba_{\max})|$.

\subsubsection{Choice of Kernel}
As discussed before, the choice of kernel is crucial for Coulomb GANs.
The $m$-dimensional Coulomb kernel and the $m$-dimensional
Plummer kernel lead to (i) $\Phi$ that is pushed
to zero via the field it creates and (ii) that
$\Phi(\Ba)=0$ for all $\Ba$ implies $\rho(\Ba)=0$ for all $\Ba$,
therefore, $p_x(\Ba)=p_y(\Ba)$ for all $\Ba$
\citep{Hochreiter:05kernel}. Thus, gradient learning with these kernels
has been proved to converge to an optimal solution.
However, both the $m$-dimensional Coulomb and the $m$-dimensional
Plummer kernel lead to numerical instabilities if $m$ is large.
Therefore the Coulomb potential $\Phi(\Ba)$ for the Coulomb GAN
was constructed by
a low-dimensional Plummer kernel $k$ with parameters $d\leq m-2$ and
$\epsilon$:
\begin{align}
\label{eq:Plummer}
\Phi(\Ba) \ &= \
\int \rho(\Bb) \ k\left(\Ba,\Bb \right) \ \Rd\Bb\ , \quad
k(\Ba,\Bb) \ = \
\frac{1}{(\sqrt{\|\Ba -\Bb\|^2 \ + \ \epsilon^2})^d} \ .
\end{align}
The original Plummer kernel is obtained with $d=m-2$.
The resulting field and potential energy is
\begin{align}
\label{eq:field}
\EE(\Ba)   \ &= \ - \ \int \rho(\Bb) \
\nab_{a} k\left(\Ba,\Bb \right) \ \Rd\Bb \ = \ - \ \nab_{a} \
               \Phi\left(\Ba\right) \ , \\
\label{eq:energy}
F\left( \rho \right) \ &= \ \frac{1}{2}
\int \rho(\Ba) \ \Phi\left(\Ba\right) \ \Rd\Ba \ = \ \frac{1}{2}
\int \int \ \rho(\Ba) \ \rho(\Bb) \
k\left(\Ba,\Bb \right) \ \Rd\Bb \ \Rd\Ba \ .
\end{align}
The next theorem states
that for freely moving generated samples, $\rho$ converges
to zero, that is, $p_x(.)=p_y(.)$, when using this potential function $\Phi(\Ba)$.
\begin{theorem}[Convergence with low-dimensional Plummer kernel]
\label{th:convergence}
For $\Ba,\Bb \in \dR^m$, $d\leq m-2$, and $\epsilon>0$
the densities $p_x(.)$ and $p_y(.)$ equalize over time
when minimizing energy $F$ with the low-dimensional Plummer kernel
by gradient descent.
The convergence is faster for larger $d$.
\end{theorem}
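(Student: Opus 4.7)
The plan is to combine a Lyapunov-function argument on $F$ with a kernel computation that forces the unique equilibrium of the flow to be $\rho\equiv 0$, and then to read off the convergence rate from the same computation. The whole proof hinges on a direct calculation of $\lap k$. Writing $r=\|\Ba-\Bb\|$ and differentiating,
\begin{equation*}
\lap_{a} k(\Ba,\Bb) \ = \ -\,d\,\bigl(r^2+\epsilon^2\bigr)^{-(d+4)/2}\,\bigl[(m-d-2)\,r^2 \ + \ m\,\epsilon^2\bigr].
\end{equation*}
Under the hypothesis $d\leq m-2$ the bracket is bounded below by $m\epsilon^2>0$, so $K(\Ba,\Bb):=-\lap_{a} k(\Ba,\Bb)$ is strictly positive, radially decreasing in $r$, and concentrated at $\Ba=\Bb$ (value $dm\,\epsilon^{-d-2}$). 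Consequently
\begin{equation*}
\di\EE(\Ba) \ = \ -\lap \Phi(\Ba) \ = \ \int \rho(\Bb)\,K(\Ba,\Bb)\,\Rd\Bb.
\end{equation*}
For $d=m-2$ and $\epsilon\to 0$ the kernel $K$ collapses to a Dirac mass, recovering the classical Coulomb identity $\di\EE \propto \rho$; for $d\leq m-2$ and $\epsilon>0$ it is a smoothed positive version of the same.

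First I would differentiate $F$ along the continuity-equation flow of Eq.~\eqref{eq:learnrho}. Using symmetry of $k$, integration by parts, and $\EE=-\nab\Phi$,
\begin{equation*}
\dot F \ = \ \int \Phi(\Ba)\,\dot\rho(\Ba)\,\Rd\Ba \ = \ -\int \Phi(\Ba)\,\di\bigl(|\rho(\Ba)|\,\EE(\Ba)\bigr)\,\Rd\Ba \ = \ -\int |\rho(\Ba)|\,\|\EE(\Ba)\|^2\,\Rd\Ba \ \leq\ 0,
\end{equation*}
so $F$ is a strict Lyapunov function. Positive-definiteness of the Plummer kernel supplies $F\geq 0$, so $F$ converges and the $\omega$-limit set of the flow lies in $\{\rho:\EE=0\text{ on }\mathrm{supp}(\rho)\}$.

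Next I would use the kernel identity to exclude $\rho\not\equiv 0$ from this set. At any local extremum $\Ba^*$ of $\rho$, the reduction in the excerpt gives $\dot\rho(\Ba^*)=-|\rho(\Ba^*)|\,\di\EE(\Ba^*)$; since $K(\Ba^*,\cdot)$ is strictly positive and peaked at $\Ba^*$, the convolution $\di\EE(\Ba^*)=\int\rho(\Bb)\,K(\Ba^*,\Bb)\,\Rd\Bb$ inherits the sign of $\rho(\Ba^*)$, so $|\rho(\Ba^*)|$ strictly decreases as long as it is positive. This drives $\|\rho\|_\infty$ to zero, giving $p_x=p_y$. The rate claim follows because the prefactor $d$ in the expression for $\lap_{a}k$ carries through to $\di\EE$, amplifying the restoring force at each density extremum by a factor proportional to $d$.

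I expect the main obstacle to be the passage from the pointwise sign statement at extrema to truly global convergence. One has to justify that $\rho$ remains regular enough along the flow for the critical-point reduction to be valid, and that the smoothed convolution $K\ast\rho$ cannot vanish on the support of a non-trivial $\rho$. This is exactly where strict positivity of $K$ together with the positive-definiteness of the low-dimensional Plummer kernel does the real work: any $\omega$-limit of the dynamics must satisfy $K\ast\rho\equiv 0$, which by injectivity of convolution against a strictly-positive-definite kernel forces $\rho\equiv 0$. I would follow \citet{Hochreiter:05kernel} for this final injectivity step and for the technical regularity conditions that make the integration by parts and the continuity-equation manipulations rigorous.
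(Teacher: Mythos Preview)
Your computation of $\lap_a k$ matches the paper's, and your energy-dissipation identity $\dot F=-\int|\rho|\,\|\EE\|^2\leq 0$ is a clean addition the paper does not make explicit. The overall strategy---reduce to the behaviour of $\rho$ at its extrema---is the same as the paper's.

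There is, however, a genuine gap at the sign step. You assert that because $K=-\lap k$ is ``strictly positive and peaked at $\Ba^*$'', the convolution $\int\rho(\Bb)K(\Ba^*,\Bb)\,\Rd\Bb$ inherits the sign of $\rho(\Ba^*)$. Positivity and peakedness alone do not give this: for fixed $\epsilon>0$ the kernel $K$ is bounded and has full support, so a small positive local maximum of $\rho$ can be overwhelmed by large negative mass elsewhere, making $\di\EE(\Ba^*)<0$ even though $\rho(\Ba^*)>0$. The paper closes exactly this gap by splitting the integral over a ball $S_\tau(\Ba^*)$ on which $\sign\rho$ is constant and its complement, then using the $\epsilon$-independent bound $|\lap k|\leq d(m-d-2)r^{-2-d}$ on the complement together with $\lap k(\Ba,\Ba)=-md\,\epsilon^{-(d+2)}$ to show that \emph{for $\epsilon$ small enough} the near contribution dominates. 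Without this smallness condition (or a substitute), the sign conclusion fails, and so does the monotone decay of $\|\rho\|_\infty$.

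Your fallback route via the $\omega$-limit set does not repair this. From $\dot F=0$ you obtain $\EE=-\nabla\Phi=0$ on $\mathrm{supp}(\rho)$, which is a statement about first derivatives of $\Phi$; it does not yield $K\ast\rho=-\lap\Phi\equiv 0$ globally, only on the interior of $\mathrm{supp}(\rho)$. Injectivity of convolution against a positive-definite kernel needs $K\ast\rho=0$ everywhere (or at least tested against $\rho$ itself to force $F=0$), and neither follows from the LaSalle characterisation you wrote down. So the equilibrium set is not obviously reduced to $\{\rho=0\}$ by this argument either. The missing ingredient in both branches is the same local-domination estimate the paper supplies via the $\epsilon$-small hypothesis; once that is in place, the paper's differential inequality $|\dot\rho(\Ba^*)|\geq\lambda\rho(\Ba^*)^2$ at extrema gives both convergence and the $d$-dependent rate.
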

\begin{proof} See Section~\ref{sec:proof_th1}. \end{proof}

\subsection{Definition of the Coulomb GAN}
The Coulomb GAN minimizes the electric potential energy from Eq.~\eqref{eq:field} using
a stochastic gradient descent based approach using mini-batches.
Appendix Section~\ref{sec:samplebased-equations} contains the equations for the Coulomb potential, field,
and energy in this case.
Generator samples are obtained by drawing $N_x$ random numbers $\Bz_i$ and transforming them
into outputs $\Bx_i = G(\Bz_i)$. Each mini-batch also includes $N_y$ real world samples $\By_i$.
This gives rise to a mini-batch specific potential, where in Eq.~\eqref{eq:Plummer}
we use $\rho(\Ba) = p_y(\Ba) - p_x(\Ba)$ and replace the expectations
by empirical means using the drawn samples:
\begin{align}
\label{eq:objectivePhi}
\hat{\Phi}(\Ba) \ &= \ \frac{1}{N_y} \sum_{i=1}^{N_y} k\left(\Ba,\By_i \right) \ - \
\frac{1}{N_x} \sum_{i=1}^{N_x} k\left(\Ba,\Bx_i \right) .
\end{align}
It is tempting to have a generator network that directly minimizes this potential $\hat{\Phi}$
between generated and training set points.
In fact, we show that $\hat{\Phi}$ is an unbiased estimate for $\Phi$ in Appendix Section~\ref{sec:samplebased-equations}.
However, the estimate has very high variance: for example, if a mini-batch
fails to sample training data from an existing mode, the field would drive all generated
samples that have been generated at this mode to move elsewhere. The high variance
has to be counteracted by extremely low learning rates, which makes learning infeasible in
practice, as confirmed by initial experiments.
Our solution
to this problem is to have a network that generalizes over the mini-batch specific
potentials: each mini-batch contains different
generator samples $\mathcal{X} = \Bx_i$ for $i=1,\ldots,N_x$ and
real world samples $\mathcal{Y} = \By_i$ for $i=1,\ldots,N_y$, they
create a batch-specific potential $\hat{\Phi}$. 
The goal of the discriminator is to learn
$\EXP_{\mathcal{X,Y}}(\hat{\Phi}(\Ba)) = \Phi(\Ba)$, i.e.,
the potential averaged over many mini-batches.
Thus the discriminator function $D$ fulfills a similar role as other typical GAN discriminator
functions, i.e., it discriminates between real and generated data such
that for any point in space $\Ba$, $D(\Ba)$ should be greater than
zero if the $p_y(\Ba) >  p_x(\Ba)$ and smaller than zero otherwise.
In particular $D(\Ba)$ also indicates, via
its gradient and its potential properties,
directions toward regions where training set samples are predominant and
where generator samples are predominant.

The generator in turn tries to move all of its samples according to
the vector field into areas where generator samples are missing and
training set samples are predominant.
The generator minimizes the approximated energy $F$ as predicted by the
discriminator. The loss  $\mathcal{L}_D$ for the discriminator and $\mathcal{L}_G$
for the generator are given by:
\begin{align}
\label{eq:disObj}
\mathcal{L}_D(D ; G) \ &= \
   \frac{1}{2} \ \EXP_{p_a}\left((D(\Ba) \ - \ \hat{\Phi}(\Ba))^2\right) \\
\mathcal{L}_G(G ; D) \ &= \ - \ \frac{1}{2} \ \EXP_{p_z}\left( D(G(\Bz)) \right) \ .
\end{align}
Where $p(\Ba) = 1/2 \int \cN(\Ba ; G(\Bz) ,\epsilon \BI) p_z(\Bz) \rd \Bz
+ 1/2 \int \cN(\Ba ; \By ,\epsilon \BI) p_y(\By) \rd \By$,
i.e., a distribution where each point of support both of the generator
and the real world distribution
is surrounded with a Gaussian ball of width $\epsilon \BI$ similar to \cite{Bishop:98gtm},
in order to overcome the problem that the generator distribution is only a sub-manifold of $\dR^m$.
These loss functions cause the approximated potential values
$D(\Ba)$ that are negative are pushed toward zero.
Finally, the Coulomb GAN, like all other GANs, consists of two parts: a generator to generate
model samples, and a discriminator that provides its learning signal.
Without a discriminator, our would be very similar to GMMNs \citep{Li:15},
as can be seen in Eq.~\eqref{eq:objectiveF}, but with an optimal Kernel specifically tailored
to the problem of estimating differences between probability distributions.

We use each mini-batch only for one update of the discriminator
and the generator. It is important to note that the discriminator
uses each sample in the mini batch twice: once as a point to generate
the mini-batch specific potential $\hat{\Phi}$, and once as a point
in space for the
evaluation of the potential $\hat{\Phi}$ and its approximation $D$.
Using each sample twice is done for performance reasons, but not
strictly necessary:
the discriminator could learn
the potential field by sampling points that lie between generator and real samples as in \cite{Gulrajani:17}, but we are mainly interested
in correct predictions in the vicinity of generator samples. Pseudocode for
the learning algorithm is detailed in Algorithm~\ref{alg:coulombgan} in the appendix.

\subsubsection{Optimality of the Solution}
Convergence of the GAN learning process was
proved for a two time-scales update rule by \citet{Heusel:17}.
A local Nash equilibrium is
a pair of
generator and discriminator $(D^*, G^*)$
that fulfills the two conditions
\begin{align*}
D^* = \argmin_{D \in U(D^*)}\mathcal{L}_D(D ; G^*) \quad \text{and} \quad
G^* = \argmin_{G \in U(G^*)}\mathcal{L}_G(G ; D^*)\ .
\end{align*}
for some neighborhoods $U(D^*)$ and $U(G^*)$.
We show in the following Theorem~\ref{th:optimality} that for Coulomb GANs
every local Nash equilibrium necessarily is identical to the unique global Nash equilibrium.
In other words, any equilibrium point of the Coulomb GAN
that is found to be local optimal has to be the one global Nash equilibrium
as the minimization
of the energy $F(\rho)$ in Eq.~\eqref{eq:objectiveF} leads to a single, global optimum at $p_y = p_x$.

\begin{theorem}[Optimal Solution]
\label{th:optimality}
If the pair $(D^*, G^*)$ is a local Nash equilibrium for the
Coulomb GAN objectives, then it is
the global Nash equilibrium, and no other local Nash equilibria exist,
and
$G^*$ has output distribution $p_x = p_y$.
\end{theorem}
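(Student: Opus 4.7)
The plan is to exploit the two-stage structure of the Nash equilibrium: first pin down $D^*$ as the solution of a pure regression problem, then argue that the induced generator objective reduces to the energy functional $F$ whose only stationary distribution is $p_x=p_y$, by leveraging Theorem~\ref{th:convergence}.

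First I would analyze the discriminator equilibrium condition. Because $\mathcal{L}_D(D;G^*) = \tfrac{1}{2}\EXP_{p_a}((D(\Ba)-\hat{\Phi}(\Ba))^2)$ is a conditional mean-squared error problem, its unconstrained minimizer in function space is the conditional expectation $D^*(\Ba) = \EXP_{\mathcal{X},\mathcal{Y}}[\hat{\Phi}(\Ba)] = \Phi^*(\Ba)$, where $\Phi^*$ is the Coulomb potential induced by $\rho^*=p_y-p_{x^*}$ via the low-dimensional Plummer kernel (the unbiasedness being exactly what Appendix~\ref{sec:samplebased-equations} asserts). So at a local Nash equilibrium we may replace $D^*$ by $\Phi^*$.

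Next I would substitute into the generator objective to get $\mathcal{L}_G(G;D^*) = -\tfrac{1}{2}\EXP_{p_z}[\Phi^*(G(\Bz))] = -\tfrac{1}{2}\int p_x(\Ba)\Phi^*(\Ba)\,\rd\Ba$. The local Nash condition on $G^*$ says that no perturbation in $U(G^*)$ can decrease this quantity. In the function-space formulation (where the generator is assumed sufficiently rich to move each sample independently) this is equivalent to the force $-\nab_a\Phi^*(\Ba)=\EE^*(\Ba)$ vanishing on the support of $p_{x^*}$. Consequently $\dot{\rho}^*(\Ba)=0$ everywhere by the continuity equation~\eqref{eq:learnrho}, i.e.\ $(p_{x^*},p_y)$ is a stationary point of the gradient flow studied in Theorem~\ref{th:convergence}.

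Now I would invoke Theorem~\ref{th:convergence}: under the low-dimensional Plummer kernel with $d\leq m-2$, the gradient flow of $F$ drives $\rho$ to $0$, so $\rho=0$ is the only stationary point of that flow (any other candidate would be a non-global stationary point contradicting convergence from arbitrary initializations). Hence $\rho^*\equiv 0$, i.e.\ $p_{x^*}=p_y$, giving $F(\rho^*)=0$. Since $F(\rho)=\tfrac{1}{2}\langle\rho,\rho\rangle_k\geq 0$ with the positive-definite Plummer kernel, this is also the global minimum, so $(D^*,G^*)$ is the global Nash equilibrium; uniqueness of the distribution $p_{x^*}=p_y$ (up to the generator realizing it) then follows from strict positive-definiteness of $k$, which makes $F$ strictly convex in $\rho$ and forces any other local Nash equilibrium to coincide in output distribution.

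The main obstacle will be bridging the function-space local-optimality statement of the Nash equilibrium with the pointwise claim that $\EE^*\equiv 0$ on the support of $p_{x^*}$: one has to rule out situations where the allowed neighborhood $U(G^*)$ is so restrictive that non-zero forces cannot be followed by any admissible perturbation of $G$. The clean way is to appeal to the paper's standing assumption that the generator is sufficiently complex so that generator samples "can move freely," which is exactly what makes the continuity-equation argument applicable and lets Theorem~\ref{th:convergence} close the loop. The remaining steps, including the derivation of the optimal regressor and the convexity/positivity of $F$, are routine given the Plummer kernel's properties already established.
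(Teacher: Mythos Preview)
Your proposal is correct and follows essentially the same two-step approach as the paper: first pin down $D^*=\Phi^*$ via convexity of the squared-error loss, then reduce the generator's local optimality to the energy functional $F$ and invoke Theorem~\ref{th:convergence} to force $\rho^*=0$ and hence $p_x=p_y$. The only cosmetic differences are that the paper phrases the second step directly as ``$F$ has no local minima other than the global one'' rather than passing through the first-order condition $\EE^*=0$ on the support of $p_{x^*}$ and stationary points of the flow, and it does not separately invoke positive-definiteness of the kernel for the uniqueness claim.
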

\begin{proof}
See Appendix Section~\ref{sec:proof_th2}.
\end{proof}

\subsubsection{Coulomb GANs in Practice}\label{sec:coulombgans-in-practice}
To implement GANs in practice, we need learnable models for $G$ and $D$.
We assume that our models for $G$ and $D$ are continuously differentiable
with respect to their parameters and inputs.
Toward this end, GANs are typically implemented as neural networks optimized by
(some variant of) gradient descent.
Thus we may not find the optimal $G^*$ or $D^*$,
since neural networks  may suffer from capacity or optimization issues.
Recent research indicates that the effect of local minima in deep learning
vanishes with increasing depth \citep{Dauphin:14, Choromanska:15, Kawaguchi:16}, such
that this limitation becomes less restrictive as our ability to train deep networks grows thanks
to hardware and optimization improvements.

The main problem with learning Coulomb GANs is to approximate the potential
function $\Phi$, which is a complex function in a high-dimensional space,
since the potential can be very non-linear and non-smooth.
When learning the discriminator,
we must ensure that enough data is sampled and averaged
over.
We already lessened the non-linear function problem by
using a low-dimensional Plummer kernel. But still, this kernel can
introduce large non-linearities if samples are close to each other.
It is crucial that the discriminator learns slow enough to accurately estimate the
potential function which is induced by the current generator.
The generator, in turn, must be even slower since it must be tracked by the
discriminator.
These approximation problems are supposed to be tackled by the
research community in near future, which would enable optimal GAN learning.

The formulation of GAN learning as a potential field naturally solves the
mode collapsing issue: the example described in Section~\ref{sec:modecollapse-example},
where a normal GAN cannot get out of a local Nash equilibria is not a converged
solution for the Coulomb GAN: If all probability mass of the generator lies in
one of the modes, then both attracting forces from real-world samples located
at the other mode as well as repelling forces from the over-represented
generator mode will act upon the generator until it generates samples at
the other mode as well.

\section{Experiments}
\label{sec:experiments}
In all of our experiments, we used a low-dimensional
Plummer Kernel of dimensionality $d=3$. This kernel
both gave best computational performance and has low risk of running into numerical
issues. We used a batch size of 128. To evaluate the quality of a GAN, the FID metric
as proposed by \citet{Heusel:17} was calculated by using 50k
samples drawn from the generator, while the training set statistics were calculated
using the whole training set. We compare to BEGAN\,\citep{Berthelot:17}, DCGAN\,\citep{Radford:15} and WGAN-GP\,\citep{Gulrajani:17} both in their original
version as well as when using the two-timescale update-rule (TTUR), using the settings from \citet{Heusel:17}.
We additionally compare to MMD-GAN\,\citep{Li:17}, which is conceptually very similar to the Coulomb GAN, but
uses a Gaussian Kernel instead of the Plummer Kernel. We use the dataset-specific settings recommended in \,\citep{Li:17}
and report the highest FID score over the course of training.
All images shown in this paper were produced with a random seed and not cherry picked.
The implementation used for these experiments is available online\footnote{
\ificlrfinal
\url{www.github.com/bioinf-jku/coulomb_gan}
\else
\url{filled.in.after.review} 
\fi
}.
The appendix Section~\ref{sec:gaussianmixture} contains an
additional toy example demonstrating
that Coulomb GANs do not suffer from mode collapse when fitting a simple Gaussian Mixture
of 25 components.

\subsection{Image Datasets}
To demonstrate the ability of the Coulomb GAN to learn distributions in high
dimensional spaces, we trained a Coulomb GAN on several popular image data sets:
The cropped and centered images
of celebrities from the Large-scale CelebFaces Attributes (\emph{``CelebA''})
 data set \citep{Liu:15}, the \emph{LSUN bedrooms} data set consists of over 3 million 64x64 pixel
images of the bedrooms category of the large scale image database LSUN
\citep{Yu:15} as well as the CIFAR-10 data set.
For these experiments, we used the DCGAN architecture \citep{Radford:15}
with a few modifications: our convolutional kernels all have a kernel size
of 5x5, our random seed that serves
as input to the generator has fewer dimensions: 32 for CelebA and LSUN bedrooms,
and 16 for CIFAR-10. Furthermore, the discriminator uses twice as many feature
channels in each layer as in the DCGAN architecture.
For the Plummer kernel, $\epsilon$ was set to 1.
We used the Adam optimizer with a learning
rate of $10^{-4}$ for the generator and $5\cdot10^{-5}$ for the discriminator.
To improve convergence performance, we used the $\tanh$ output
activation function \citep{LeCun:98}.
For regularization we used an L2 weight decay term with a weighting
factor of $10^{-7}$.
Learning was stopped by monitoring the FID metric \citep{Heusel:17}. Once
learning plateaus, we scaled the learning rate down by a factor of 10 and let
it continue once more until the FID plateaus. The results are reported in
Table~\ref{tbl:imageresults}, and generated images can be seen in Figure~\ref{fig:image-examples}
and in the Appendix in Section~\ref{sec:more-examples}.
Coulomb GANs tend to outperform standard GAN approaches like BEGAN and DCGAN, but
are outperformed by the Improved Wasserstein GAN. However it is important to note
that the Improved Wasserstein GAN used a more advanced network architecture based
on ResNet blocks\,\citep{Gulrajani:17}, which we could not replicate due to runtime constraints.
Overall, the low FID of Coulomb GANs stem from the fact that the images show a wide variety
of different samples. E.g. on CelebA, Coulomb GAN exhibit a very wide variety of
faces, backgrounds, eye colors and orientations.
\begin{figure}[h]
\centering
    \includegraphics[width=0.8\textwidth]{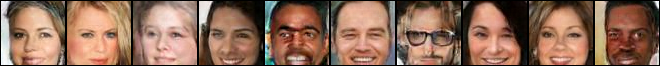} \\
    \includegraphics[width=0.8\textwidth]{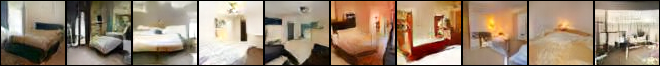} \\
    \includegraphics[width=0.8\textwidth]{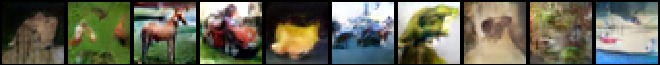}
    \caption{Images from a Coulomb GAN after training on CelebA (first row),
    LSUN bedrooms (second row) and CIFAR 10 (last row). Further examples are located in
    the appendix in Section~\ref{sec:more-examples}}
\label{fig:image-examples}
\end{figure}
\begin{figure}[h]
\centering
    \includegraphics[width=0.8\textwidth]{./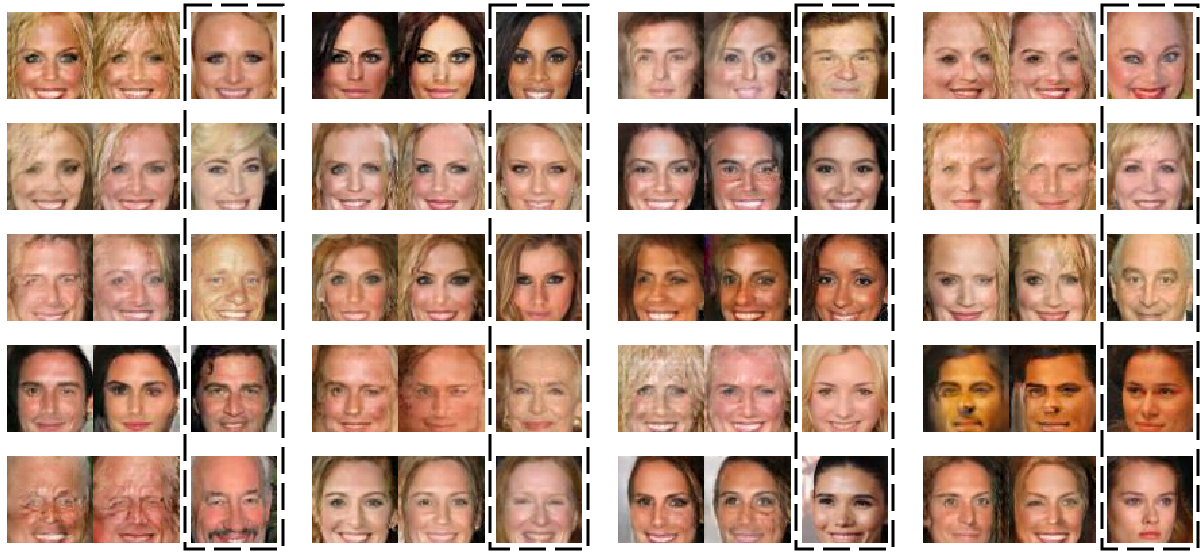}
    \caption{The most similar pairs found in batches of 1024 generated
      faces sampled from the Coulomb GAN,
      and the nearest neighbor from the training data shown as third
      image.
      Distances were calculated as Euclidean distances on pixel level.}
\label{fig:celeba-duplicates}
\end{figure}
To further investigate how much variation the samples generated by the
Coulomb GAN contains, we followed the advice of
Arora and Zhang \citep{Arora:17birthdaytest} to estimate
the support size of the generator's distribution by checking how large a sample
from the generator must be before we start generating duplicates.
We were able to generate duplicates with a probability of around 50\,\%
when using samples of size 1024, which indicates that the support size
learned by the Coulomb GAN would be around 1M.
This is a strong indication that the Coulomb GAN was able to
spread out its samples over the whole target distribution.
A depiction is included in Figure~\ref{fig:celeba-duplicates},
which also shows the nearest neighbor in the training set
of the generated images, confirming that the Coulomb GAN does not just memorize training images.

\subsection{Language Modeling}
We repeated the experiments from \citet{Gulrajani:17}, where Improved Wasserstein GANs (WGAN-GP) were trained
to produce text samples after being trained on the Google Billion Word data set \citep{Chelba:13},
using the same network architecture as in the original publication. We use the Jensen-Shannon-divergence
on 4-grams and 6-grams as an evaluation criterion. The results are summarized in Table~\ref{tbl:wordresults}.
\begin{table}[htp]
\small
\begin{subfigure}[t]{0.31\textwidth}
\begin{tabular}{crrrrrrr}
 \toprule
 data set  & WGAN-GP & ours\\
 \midrule
 4 grams  & 0.38 / 0.35     &   0.35  \\
 6 grams  & 0.77 / 0.74     &   0.74  \\
 ~\\
 \bottomrule
 \end{tabular}
 \caption[Results DCGAN]{Normalized
Jensen-Shanon-Divergence for the Google Billion Word data. Values for WGAN-GP are
without/with TTUR, taken from \citet{Heusel:17}.}
 \label{tbl:wordresults}
\end{subfigure}~\qquad
\begin{subfigure}[t]{0.6\textwidth}
    \centering
        \begin{tabular}{lcccrr}
         \toprule
         data set  & BEGAN & DCGAN & WGAN-GP & MMD & ours\\
         \midrule
         CelebA   & 29.2 / 28.5  &  21.4 / 12.5 &  ~4.8 / ~4.2 &   63.2 &  9.3  \\
         LSUN     & 113 / 112 &  70.4 / 57.5 &  20.5 / ~9.5 &   94.9 & 31.2  \\
         CIFAR10 &       -       &       -      &  29.3 / 24.8 &   38.2 & 27.3  \\
         \bottomrule
         \end{tabular}
        \caption[Results DCGAN]{Performance comparison in FID (lower is better)
         on different data sets. Values for all methods except Coulomb GAN and MMD-GAN
        are from without/with TTUR, taken from \citet{Heusel:17}.}
        \small
         \label{tbl:imageresults}
\end{subfigure}
\end{table}

\section{Conclusion}
The Coulomb GAN is a generative adversarial network with strong theoretical guarantees.
Our theoretical results show that the Coulomb GAN will be able to approximate
the real distribution perfectly if the networks have sufficient capacity and training does
not get stuck in local minima.
Our results show that the potential field used by the Coulomb GAN far outperforms
MMD based approaches due to its low-dimensional Plummer kernel, which is better suited for
modeling probability density functions, and is very
effective at eliminating the mode collapse problem in GANs.
This is because our loss function forces the generated
samples to occupy different regions of the learned distribution.
In practice, we have found that Coulomb GANs are able to produce a wide range
of different samples. However, in our experience,
this sometimes leads to a small number of generated samples that are
non-sensical interpolations of existing data modes.
While these are sometimes also present in other GAN models
\citep{Radford:15},
we found that our model produces such
images at a slightly higher rate.
This issue might be solved by finding better ways of learning the discriminator,
as learning the correct potential field is crucial for the Coulomb GAN's performance.
We also observed that increasing the
capacity of the discriminator seems to always increase the generative performance.
We thus hypothesize that the largest issue in learning Coulomb GANs
is that the discriminator needs to approximate the potential field $\Phi$ very
well in a high-dimensional space.
In summary, instead of directly optimizing
a criterion based on local differences of densities which can exhibit many
local minima, Coulomb GANs are based on a potential field that has no
local minima. The potential field is created by point charges
in an analogy to electric field in physics.
We have proved that if learning converges then it converges
to the optimal solution if the samples can be moved freely.
We showed that Coulomb GANs avoid mode collapsing, model the target distribution
more truthfully than standard GANs,
and do not overlook high probability regions of the
target distribution.

\ificlrfinal
\section*{Acknowledgments}
This work was supported by Zalando SE with Research Agreement 01/2016,
Audi.JKU Deep Learning Center, Audi Electronic Venture GmbH, IWT research grant IWT150865
(Exaptation), H2020 project grant 671555 (ExCAPE) and NVIDIA Corporation. The authors
would like to thank Philipp Renz for fruitful discussions.
\fi

\bibliography{Coulomb_GAN}

\begin{thebibliography}{42}
\providecommand{\natexlab}[1]{#1}
\providecommand{\url}[1]{\texttt{#1}}
\expandafter\ifx\csname urlstyle\endcsname\relax
  \providecommand{\doi}[1]{doi: #1}\else
  \providecommand{\doi}{doi: \begingroup \urlstyle{rm}\Url}\fi

\bibitem[Arjovsky \& Bottou(2017)Arjovsky and Bottou]{Arjovsky:17Towards}
M.~Arjovsky and L.~Bottou.
\newblock Towards principled methods for training generative adversarial
  networks.
\newblock \emph{International Conference on Learning Representations (ICLR)},
  2017.

\bibitem[Arjovsky et~al.(2017)Arjovsky, Chintala, and Bottou]{Arjovsky:17}
M.~Arjovsky, S.~Chintala, and L.~Bottou.
\newblock {W}asserstein generative adversarial networks.
\newblock \emph{Proceedings of the 34th International Conference on Machine
  Learning (ICML)}, 2017.

\bibitem[Arora \& Zhang(2017)Arora and Zhang]{Arora:17birthdaytest}
S.~Arora and Y.~Zhang.
\newblock Do {GANs} actually learn the distribution? {An} empirical study.
\newblock \emph{ArXiv e-prints}, 2017.

\bibitem[Berthelot et~al.(2017)Berthelot, Schumm, and Metz]{Berthelot:17}
D.~Berthelot, T.~Schumm, and L.~Metz.
\newblock {BEGAN:} boundary equilibrium generative adversarial networks.
\newblock \emph{ArXiv e-prints}, abs/1703.10717, 2017.

\bibitem[Bishop et~al.(1998)Bishop, Svens{\'e}n, and Williams]{Bishop:98gtm}
C.~Bishop, M.~Svens{\'e}n, and C.~Williams.
\newblock {GTM:} the generative topographic mapping.
\newblock \emph{Neural computation}, 10\penalty0 (1):\penalty0 215--234, 1998.

\bibitem[Che et~al.(2017)Che, Li, Jacob, Bengio, and Li]{Che:17}
T.~Che, Y.~Li, A.~P. Jacob, Y.~Bengio, and W.~Li.
\newblock Mode regularized generative adversarial networks.
\newblock \emph{International Conference on Learning Representations (ICLR)},
  2017.

\bibitem[Chelba et~al.(2013)Chelba, Mikolov, Schuster, Ge, Brants, Koehn, and
  Robinson]{Chelba:13}
C.~Chelba, T.~Mikolov, M.~Schuster, Q.~Ge, T.~Brants, P.~Koehn, and
  T.~Robinson.
\newblock One billion word benchmark for measuring progress in statistical
  language modeling.
\newblock \emph{ArXiv e-prints}, 2013.

\bibitem[Chintala et~al.(2016)Chintala, Denton, Arjovsky, and
  Mathieu]{Chintala:16}
S.~Chintala, E.~Denton, M.~Arjovsky, and M.~Mathieu.
\newblock How to train a {GAN}? {T}ips and tricks to make {GANs} work.
\newblock https://github.com/soumith/ganhacks, 2016.

\bibitem[Choromanska et~al.(2015)Choromanska, Henaff, Mathieu, Arous, and
  LeCun]{Choromanska:15}
A.~Choromanska, M.~Henaff, M.~Mathieu, G.~B. Arous, and Y.~LeCun.
\newblock The loss surfaces of multilayer networks.
\newblock \emph{Journal of Machine Learning Research}, 38:\penalty0 192--204,
  2015.

\bibitem[Clevert et~al.(2016)Clevert, Unterthiner, and
  Hochreiter]{Clevert:16elu}
D.-A. Clevert, T.~Unterthiner, and S.~Hochreiter.
\newblock Fast and accurate deep network learning by exponential linear units
  {(ELUs)}.
\newblock \emph{International Conference on Learning Representations (ICLR)},
  2016.

\bibitem[Dauphin et~al.(2014)Dauphin, Pascanu, Gulcehre, Cho, Ganguli, and
  Bengio]{Dauphin:14}
Y.~N. Dauphin, R.~Pascanu, C.~Gulcehre, K.~Cho, S.~Ganguli, and Y.~Bengio.
\newblock Identifying and attacking the saddle point problem in
  high-dimensional non-convex optimization.
\newblock In \emph{Advances in Neural Information Processing Systems 27}, pp.\
  2933--2941, 2014.

\bibitem[Dembo \& Zeitouni(1988)Dembo and Zeitouni]{Dembo:88}
A.~Dembo and O.~Zeitouni.
\newblock General potential surfaces and neural networks.
\newblock \emph{Phys. Rev. A}, 37:\penalty0 2134--2143, 1988.
\newblock \doi{10.1103/PhysRevA.37.2134}.

\bibitem[Dziugaite et~al.(2015)Dziugaite, Roy, and Ghahramani]{Dziugaite:15}
G.~K. Dziugaite, D.~M. Roy, and Z.~Ghahramani.
\newblock Training generative neural networks via maximum mean discrepancy
  optimization.
\newblock In \emph{Proceedings of the Thirty-First Conference on Uncertainty in
  Artificial Intelligence (UAI'15)}, pp.\  258--267, 2015.

\bibitem[Efthimiou \& Frye(2014)Efthimiou and Frye]{Efthimiou:14}
C.~J. Efthimiou and C.~Frye.
\newblock \emph{Spherical Harmonics in $p$ Dimensions}.
\newblock World Scientific, 2014.

\bibitem[Goodfellow et~al.(2014)Goodfellow, Pouget-Abadie, Mirza, Xu,
  Warde-Farley, Ozair, Courville, and Bengio]{Goodfellow:14nips}
I.~Goodfellow, J.~Pouget-Abadie, M.~Mirza, B.~Xu, D.~Warde-Farley, S.~Ozair,
  A.~Courville, and Y.~Bengio.
\newblock Generative adversarial nets.
\newblock In \emph{Advances in Neural Information Processing Systems 27}, pp.\
  2672--2680, 2014.

\bibitem[Goodfellow(2014)]{Goodfellow:14criteria}
I.~J. Goodfellow.
\newblock On distinguishability criteria for estimating generative models.
\newblock \emph{ArXiv e-prints}, 2014.

\bibitem[Goodfellow(2017)]{Goodfellow:17tutorial}
I.~J. Goodfellow.
\newblock {NIPS} 2016 tutorial: Generative adversarial networks.
\newblock \emph{ArXiv e-prints}, 2017.

\bibitem[Gretton et~al.(2012)Gretton, Borgwardt, Rasch, Sch\"{o}lkopf, and
  Smola]{Gretton:12}
A.~Gretton, K.~M. Borgwardt, M.~J. Rasch, B.~Sch\"{o}lkopf, and A.~Smola.
\newblock A kernel two-sample test.
\newblock \emph{J. Mach. Learn. Res.}, 13:\penalty0 723--773, 2012.

\bibitem[Gulrajani et~al.(2017)Gulrajani, Ahmed, Arjovsky, Dumoulin, and
  Courville]{Gulrajani:17}
I.~Gulrajani, F.~Ahmed, M.~Arjovsky, V.~Dumoulin, and A.~Courville.
\newblock Improved training of {Wasserstein} {GANs}.
\newblock \emph{ArXiv e-prints}, 2017.

\bibitem[Gutmann \& Hyv\"{a}rinen(2012)Gutmann and Hyv\"{a}rinen]{Gutmann:12}
M.~U. Gutmann and A.~Hyv\"{a}rinen.
\newblock Noise-contrastive estimation of unnormalized statistical models, with
  applications to natural image statistics.
\newblock \emph{J. Mach. Learn. Res.}, 13\penalty0 (1):\penalty0 307--361,
  2012.

\bibitem[Heusel et~al.(2017)Heusel, Ramsauer, Unterthiner, Nessler, Klambauer,
  and Hochreiter]{Heusel:17}
M.~Heusel, H.~Ramsauer, T.~Unterthiner, B.~Nessler, G.~Klambauer, and
  S.~Hochreiter.
\newblock {GANs} trained by a two time-scale update rule converge to a {Nash}
  equilibrium.
\newblock \emph{ArXiv e-prints}, 2017.

\bibitem[Hochreiter \& Mozer(2001)Hochreiter and Mozer]{Hochreiter:01cltech}
S.~Hochreiter and M.~C. Mozer.
\newblock {Coulomb} classifiers: {R}einterpreting {SVM}s as electrostatic
  systems.
\newblock Technical Report CU-CS-921-01, Department of Computer Science,
  University of Colorado, Boulder, 2001.

\bibitem[Hochreiter \& Obermayer(2005)Hochreiter and
  Obermayer]{Hochreiter:05kernel}
S.~Hochreiter and K.~Obermayer.
\newblock Optimal kernels for unsupervised learning.
\newblock In \emph{Proceedings of the IEEE International Joint Conference on
  Neural Networks}, volume~3, pp.\  1895--1899, 2005.

\bibitem[Hochreiter et~al.(2003)Hochreiter, Mozer, and
  Obermayer]{Hochreiter:03c}
S.~Hochreiter, M.~C. Mozer, and K.~Obermayer.
\newblock Coulomb classifiers: {G}eneralizing support vector machines via an
  analogy to electrostatic systems.
\newblock In \emph{Advances in Neural Information Processing Systems 15}, pp.\
  545--552. 2003.

\bibitem[Isola et~al.(2017)Isola, Zhu, Zhou, and Efros]{Isola:16}
P.~Isola, J.-Y. Zhu, T.~Zhou, and A.~A. Efros.
\newblock Image-to-image translation with conditional adversarial networks.
\newblock \emph{ArXiv e-prints}, 2017.

\bibitem[Kawaguchi(2016)]{Kawaguchi:16}
K.~Kawaguchi.
\newblock Deep learning without poor local minima.
\newblock In \emph{Advances in Neural Information Processing Systems 29}, pp.\
  586--594, 2016.

\bibitem[Klambauer et~al.(2017)Klambauer, Unterthiner, Mayr, and
  Hochreiter]{Klambauer:17}
G.~Klambauer, T.~Unterthiner, A.~Mayr, and S.~Hochreiter.
\newblock Self-normalizing neural networks.
\newblock \emph{ArXiv e-prints}, 1706.02515, 2017.

\bibitem[LeCun et~al.(1998)LeCun, Bottou, Orr, and M\"{u}ller]{LeCun:98}
Y.~LeCun, L.~Bottou, G.~Orr, and K.~R. M\"{u}ller.
\newblock Efficient {BackProp}.
\newblock In \emph{Neural Networks: Tricks of the Trade}, pp.\  9--50, London,
  UK, 1998. Springer-Verlag.

\bibitem[Ledig et~al.(2016)Ledig, Theis, Huszar, Caballero, Aitken, Tejani,
  Totz, Wang, and Shi]{Ledig:16}
C.~Ledig, L.~Theis, F.~Huszar, J.~Caballero, A.~P. Aitken, A.~Tejani, J.~Totz,
  Z.~Wang, and W.~Shi.
\newblock Photo-realistic single image super-resolution using a generative
  adversarial network.
\newblock \emph{ArXiv e-prints}, 2016.

\bibitem[Li et~al.(2017)Li, Chang, Cheng, Yang, and P{\'{o}}czos]{Li:17}
C-L. Li, W-C. Chang, Y.~Cheng, Y.~Yang, and B.~P{\'{o}}czos.
\newblock {MMD GAN: Towards Deeper Understanding of Moment Matching Network}.
\newblock In I.~Guyon, U.~V. Luxburg, S.~Bengio, H.~Wallach, R.~Fergus,
  S.~Vishwanathan, and R.~Garnett (eds.), \emph{Advances in Neural Information
  Processing Systems 30}, pp.\  2200--2210. 2017.

\bibitem[Li et~al.(2015)Li, Swersky, and Zemel]{Li:15}
Y.~Li, K.~Swersky, and R.~Zemel.
\newblock Generative moment matching networks.
\newblock In \emph{Proceedings of the 32nd International Conference on Machine
  Learning (ICML 2015)}, pp.\  1718--1727, 2015.

\bibitem[Lim \& Ye(2017)Lim and Ye]{Lim:17}
J.~H. Lim and J.~C. Ye.
\newblock Geometric {GAN}.
\newblock \emph{ArXiv e-prints}, 2017.

\bibitem[Liu et~al.(2015)Liu, Luo, Wang, and Tang]{Liu:15}
Z.~Liu, P.~Luo, X.~Wang, and X.~Tang.
\newblock Deep learning face attributes in the wild.
\newblock In \emph{Proceedings of International Conference on Computer Vision
  (ICCV)}, 2015.

\bibitem[Metz et~al.(2016)Metz, Poole, Pfau, and Sohl-Dickstein]{Metz:16}
L.~Metz, B.~Poole, D.~Pfau, and J.~Sohl-Dickstein.
\newblock Unrolled generative adversarial networks.
\newblock \emph{ArXiv e-prints}, 2016.

\bibitem[Mroueh et~al.(2017)Mroueh, Sercu, and Goel]{Mroueh:17}
Y.~Mroueh, T.~Sercu, and V.~Goel.
\newblock {McGan}: Mean and covariance feature matching {GAN}.
\newblock \emph{ArXiv e-prints}, 2017.

\bibitem[Perrone \& Cooper(1995)Perrone and Cooper]{Perrone:95}
M.~P. Perrone and L.~N. Cooper.
\newblock Coulomb potential learning.
\newblock In M.~A. Arbib (ed.), \emph{The Handbook of Brain Theory and Neural
  Networks}, pp.\  272--275, Cambridge, MA, 1995. The MIT Press.

\bibitem[Radford et~al.(2016)Radford, Metz, and Chintala]{Radford:15}
A.~Radford, L.~Metz, and S.~Chintala.
\newblock Unsupervised representation learning with deep convolutional
  generative adversarial networks.
\newblock \emph{International Conference on Learning Representations (ICLR)},
  2016.

\bibitem[Robbins \& Monro(1951)Robbins and Monro]{Robbins:51}
H.~Robbins and S.~Monro.
\newblock A stochastic approximation method.
\newblock \emph{Ann. Math. Statist.}, 22\penalty0 (3):\penalty0 400--407, 1951.
\newblock \doi{10.1214/aoms/1177729586}.

\bibitem[Salimans et~al.(2016)Salimans, Goodfellow, Zaremba, Cheung, Radford,
  and Chen]{Salimans:16}
T.~Salimans, I.~J. Goodfellow, W.~Zaremba, V.~Cheung, A.~Radford, and X.~Chen.
\newblock Improved techniques for training {GANs}.
\newblock \emph{ArXiv e-prints}, 2016.

\bibitem[Schwartz(1972)]{Schwartz:87}
M.~Schwartz.
\newblock \emph{Principles of Electrodynamics}.
\newblock McGraw-Hill, 1972.

\bibitem[Yu et~al.(2015)Yu, Zhang, Song, Seff, and Xiao]{Yu:15}
F.~Yu, Y.~Zhang, S.~Song, A.~Seff, and J.~Xiao.
\newblock {LSUN:} construction of a large-scale image dataset using deep
  learning with humans in the loop.
\newblock \emph{ArXiv e-prints}, 2015.

\bibitem[Zhao et~al.(2017)Zhao, Mathieu, and LeCun]{Zhao:16}
J.~J. Zhao, M.~Mathieu, and Y.~LeCun.
\newblock Energy-based generative adversarial network.
\newblock \emph{International Conference on Learning Representations (ICLR)},
  2017.

\end{thebibliography}
\bibliographystyle{iclr2018_conference}

\newpage
\appendix
\section{Appendix}
\subsection{Example of Convergence to Mode Collapse in Conventional GANs}
\label{sec:modecollapse-example}
As an example of how a GAN can converge to a Nash Equilibrium that exhibits
mode collapse, consider a target distribution
that consists of two distinct/non-overlapping regions of support $C_1$ and $C_2$ that are distant from each other,
i.e., the target probability is zero outside of $C_1$ and $C_2$.
Further assume that 50\,\% of the probability mass is in $C_1$ and
50\,\% in $C_2$.
Assume that the the generator has mode-collapsed onto $C_1$,
which contains 100\,\% of the generator's probability mass.
In this situation, the optimal discriminator classifies all points
from $C_2$ as ''real`` (pertaining to the target distribution) by supplying an output of $1$ for them ($1$ is the
target for real samples and $0$ the target for generated samples).
Within $C_1$, the other region,
the discriminator sees twice as many generated
data points as real ones, as 100\,\% of the
probability mass of the generator's distribution is in $C_1$, but only 50\,\% of the
probability mass of the real data distribution.
So one third of the points seen by the discriminator in $C_1$ are real,
the other 2 thirds are generated.
Thus, to minimize its prediction error for a proper objective
(squared or cross entropy), the discriminator
has to output $1/3$ for every point from $C_1$.
The optimal output is even independent
of the exact form of the real distribution in $C_1$. The generator
will match the shape of the target distribution locally. If the
shape is not matched,
local gradients of the discriminator with respect to its input would
be present and the generator would improve locally. If local
improvements of the generator are no longer possible, the shape of the
target distribution is matched and the discriminator output is
locally constant.
In this situation,
the expected gradient of the discriminator is the zero vector, because it has
reached an optimum.
Since the discriminator output is constant in $C_1$ (and $C_2$), the generator's
expected gradient is the zero vector, too.
The situation is also stable
even though we still have random fluctuations from the ongoing
stochastic gradient (SGD) learning:
whenever the generator produces data outside of (but close to) $C_1$,
the discriminator can easily detect this and push
the generator's samples back.
Inside $C_1$, small deviations of the generator from the shape of the real distribution are detected
by the discriminator as well, by deviating slightly from
$1/3$.
Subsequently, the generator is pushed back to the original shape.
If the discriminator deviates from its optimum, it will also be forced back to its optimum.
So overall, the GAN learning reached a local Nash equilibrium and has
converged in the sense that the parameters fluctuate around the
attractor point (fluctuations depend on learning rate, sample size, etc.).
To achieve true mathematical convergence, \citet{Heusel:17}
assume decaying learning rates to anneal the random fluctuations,
similar to \citet{Robbins:51} original
convergence proof for SGD.

\subsection{Proof of Theorem 1}
\label{sec:proof_th1}

We first recall Theorem~\ref{th:convergence}:
\begin{theorem*}[Convergence with low-dimensional Plummer kernel]
For $\Ba,\Bb \in \dR^m$, $d\leq m-2$, and $\epsilon>0$
the densities $p_x(.)$ and $p_y(.)$ equalize over time
when minimizing energy $F$ with the low-dimensional Plummer kernel
by gradient descent.
The convergence is faster for larger $d$.
\end{theorem*}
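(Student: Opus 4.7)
The plan is to follow the Lyapunov/continuity-equation blueprint sketched in Section 2.3, using the energy $F$ as a Lyapunov functional. First I would establish that $F(\rho)\geq 0$ with equality iff $\rho\equiv 0$, which reduces to showing the low-dimensional Plummer kernel $k(\Ba,\Bb)=(\|\Ba-\Bb\|^2+\epsilon^2)^{-d/2}$ is strictly positive definite on $\dR^m$. This follows from its Fourier representation: for $d>0$ the transform $\hat k(\Bxi)$ is a strictly positive modified-Bessel-type function, so by Parseval $F(\rho)=\frac{1}{2}\int |\hat\rho(\Bxi)|^2 \hat k(\Bxi)\,\rd\Bxi$, which vanishes only for $\rho\equiv 0$.

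Next I would compute $\dot F$ along the continuity flow $\dot\rho=-\di(|\rho|\EE)$. Since the first variation satisfies $\delta F/\delta\rho=\Phi$ and $\EE=-\nab\Phi$, integration by parts (with mild decay at infinity) gives
\begin{align*}
\dot F \ = \ \int \Phi(\Ba)\,\dot\rho(\Ba)\,\rd\Ba \ = \ \int \nab\Phi(\Ba)\cdot|\rho(\Ba)|\EE(\Ba)\,\rd\Ba \ = \ -\int |\rho(\Ba)|\,\|\EE(\Ba)\|^2\,\rd\Ba \ \leq \ 0,
\end{align*}
so $F$ is monotonically decreasing along the flow and bounded below by $0$.

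The core step, and where I expect to spend the most effort, is upgrading this monotonicity into actual pointwise convergence $\rho\to 0$. The strategy sketched in the paper is to track $\Ba^*$ where $|\rho|$ attains its global maximum, since at such a point $\nab|\rho(\Ba^*)|=0$ and the continuity equation reduces to $\dot\rho(\Ba^*)=-|\rho(\Ba^*)|\,\di\EE(\Ba^*)$. It thus suffices to prove the sign condition $\sgn(\di\EE(\Ba^*))=\sgn(\rho(\Ba^*))$. A direct computation yields
\begin{align*}
-\lap_{\Ba} k(\Ba,\Bb) \ = \ (\|\Ba-\Bb\|^2+\epsilon^2)^{-d/2-2}\bigl[\,dm\epsilon^2 \ + \ d(m-d-2)\|\Ba-\Bb\|^2\,\bigr],
\end{align*}
which is non-negative precisely when $d\leq m-2$, so $\di\EE=(-\lap k)*\rho$ is the convolution of $\rho$ with a strictly positive, radially decreasing bump. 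Writing this bump as $\kappa(r)=\kappa(0)-\tilde\kappa(r)$ with $\tilde\kappa\geq 0$, and using that $\int\rho=0$ (since $p_x,p_y$ are densities) together with the extremal bound $|\rho(\Bb)|\leq|\rho(\Ba^*)|$ with strict inequality on a nontrivial set, would show $\di\EE(\Ba^*)$ inherits the sign of $\rho(\Ba^*)$. This gives uniform decrease of $\|\rho\|_\infty$, which, combined with the $F$-monotonicity, pins down the $\omega$-limit to $\{\rho\equiv 0\}$, i.e., $p_x=p_y$.

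For the ``faster for larger $d$'' claim I would compare the bump $-\lap k$ across values of $d$: at the origin it evaluates to $dm\epsilon^{-d-2}$, which grows with $d$, and the bump becomes more sharply concentrated, so $\di\EE$ is a better pointwise approximation of a positive multiple of $\rho$ and hence $|\dot\rho(\Ba^*)|$ is larger. The hardest technical point is making the sign argument quantitative enough to yield a uniform decrease rate without extra smoothness hypotheses on $\rho$; the positive-definiteness/Fourier argument alone controls $F$ but not $\|\rho\|_\infty$, so the convolution-with-positive-bump bound must do the heavy lifting.
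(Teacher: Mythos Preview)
Your overall plan---energy monotonicity of $F$, tracking the maximiser of $|\rho|$, and establishing the sign condition $\sgn(\di\EE(\Ba^*))=\sgn(\rho(\Ba^*))$ via the explicit Laplacian of the Plummer kernel---parallels the paper's proof. The dissipation identity $\dot F=-\int|\rho|\,\|\EE\|^2\leq 0$ and the Fourier/positive-definiteness argument for $F\geq 0$ are clean additions the paper does not spell out, and your formula for $-\lap_{\Ba}k$ matches the paper's.

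The genuine gap is the sign step. Writing the positive bump $\kappa=-\lap k$ as $\kappa(0)-\tilde\kappa$ and invoking $\int\rho=0$ together with $|\rho(\Bb)|\leq|\rho(\Ba^*)|$ does \emph{not} force $\di\EE(\Ba^*)=\int\rho(\Bb)\,\kappa(\|\Ba^*-\Bb\|)\,\rd\Bb$ to share the sign of $\rho(\Ba^*)$. Concretely: let $\rho$ have a tiny spike of height $1$ at $\Ba^*$, carry almost all remaining positive mass on a region very far from $\Ba^*$ (where $\kappa\approx 0$), and place the compensating negative mass at moderate distance (where $\kappa$ is still appreciable). Then $\Ba^*$ is the global maximiser of $|\rho|$, the extremal bound holds with strict inequality on a large set, yet $\int\rho\,\kappa<0$. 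Nothing in the $L^\infty$ bound or $\int\rho=0$ excludes this configuration.

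The paper obtains the sign condition by localisation in $\epsilon$, which is the missing ingredient in your proposal. It fixes a ball $S_\tau(\Ba)$ on which $\sgn\rho=\sgn\rho(\Ba)$, bounds the far contribution $\bigl|\int_{T\setminus S_\tau(\Ba)}\rho\,\lap_{\Ba} k\,\rd\Bb\bigr|$ via $|\lap k|\leq d(m-d-2)r^{-2-d}$ \emph{uniformly in $\epsilon$}, and then shrinks $\epsilon$ so that the near contribution---whose peak value $|\lap k(\Ba,\Ba)|=md\,\epsilon^{-(d+2)}$ blows up---dominates. This same localisation yields the quantitative lower bound $|\di\EE(\Ba)|\geq\lambda|\rho(\Ba)|$ at every extremum, from which the paper solves the comparison ODE to get $|\rho|_{\max}(t)\leq\bigl(\lambda t+|\rho|_{\max}(0)^{-1}\bigr)^{-1}$ and reads off that larger $d$ gives larger $\lambda$. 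Your convolution-with-positive-bump heuristic needs this $\epsilon$-small argument (or a genuine substitute) to close.
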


 In a first step, we prove that for local maxima or local
 minima $\Ba$ of $\rho$, the expression
 $\sign (\di \EE(\Ba)) =  \sign( \rho(\Ba) )$ holds for $\epsilon$
 small enough.
 For proving this equation, we apply the Laplace operator for spherical
 coordinates to the low-dimensional Plummer kernel.
 Using the result, we see that the integral
 $\di \EE(\Ba) =  -  \int \rho(\Bb) \lap_{a} k\left(\Ba,\Bb \right) \Rd\Bb$
 is dominated by large negative values of $\lap_{a} k$
 around $\Ba$. These negative values can even
 be decreased by decreasing $\epsilon$.
 Therefore we can ensure by a small enough $\epsilon$ that
 at each local minimum and local maximum $\Ba$ of $\rho$
 $\sign(\dot{\rho}(\Ba)) =  -  \sign(\rho(\Ba))$.
 Thus, the maximal and minimal points of $\rho$ move toward zero.

 In a second step, we show that new maxima or minima cannot appear and
 that the movement of $\Phi$ toward zero stops at zero and not earlier.
 Since $\rho$ is continuously differentiable, all
 points in environments of maxima and minima move toward zero.
 Therefore the largest $|\rho(\Ba)|$ moves toward zero.
 We have to ensure that moving toward zero does not converge to a
 point apart from zero.
 We derive that the movement toward zero is lower bounded by
 $\dot{\rho}(\Ba) = - \sign(\rho(\Ba))  \lambda  \rho^2(\Ba)$.
 Thus, the movement slows down at $\rho(\Ba)=0$. Solving the
 differential equation and applying it to the maximum of the absolute
 value of $\rho$ gives
 $|\rho|_{\max} (t) = 1/(\lambda  t  +   (|\rho|_{\max}(0))^{-1} )$.
 Thus, $\rho$ converges to zero over time.

\begin{proof}
For $d=m-2$, we have $\lap k(\Ba,\Bb)=\delta(\Ba-\Bb)$, where the
theorem  has already been proved
for $\epsilon$ small enough \citep{Hochreiter:05kernel}.

At each local minimum and local maximum $\Ba$ of $\rho$ we have
$\nab_{a} \rho(\Ba) = 0$.
Using the product rule for the divergence
operator, Eq.~\eqref{eq:learnrho} reduces to
\begin{align}
\dot{\rho}(\Ba) \ &= \
- \ \sign(\rho(\Ba))
\ \rho(\Ba) \ \di \EE(\Ba) \ .
\end{align}
The term $\di \EE(\Ba)$ can be expressed as
\begin{align}
\di \EE(\Ba)   \ &= \ - \ \lap_{a} \Phi \left( \Ba \right)
\ = \ - \ \int \rho(\Bb) \ \lap_{a} k\left(\Ba,\Bb \right) \ \Rd\Bb \ .
\end{align}

We next consider $\lap_{a} k\left(\Ba,\Bb \right)$ for the
low-dimensional Plummer kernel.
We define the {\em spherical Laplace operator} in
$(m-1)$ dimensions as $\lap_{S^{m-1}}$, then the Laplace operator in
spherical coordinates is (Proposition~2.5 in Frye \& Efthimiou
\citep{Efthimiou:14}):
\begin{align}
\label{eq:polarLap}
\lap \ &= \ \frac{\partial^2}{\partial r^2} \ + \
\frac{m-1}{r} \ \frac{\partial}{\partial r} \ + \
\frac{m-1}{r^2} \ \lap_{S^{m-1}} \ .
\end{align}
Note that  $\lap_{S^{m-1}}$ only has second order derivatives with
respect to the angles of the spherical coordinates.

With $r=\|\Ba -\Bb\|$ we obtain for the Laplace operator applied to
the low-dimensional Plummer kernel:
\begin{align}
\label{eq:LapPlummer}
\lap k(\Ba,\Bb) \ &= \
d \ (- \ \epsilon^2 \ m \ + \ (2 \ + \ d \ - \ m) \ r^2) \ (\epsilon^2
\ + \ r^2)^{-2 - d/2}  \ .
\end{align}
and in particular
\begin{align}
\label{eq:LapPlummer0}
\lap k(\Ba,\Ba) \ &= \
- \ m \ d \epsilon^{-(d+2)} \ .
\end{align}
For $d\leq m-2$ we have $(2  +  d  -  m)\leq0$, and obtain
\begin{align}
\lap k(\Ba,\Bb) \ < \ 0 \ ,
\end{align}
and
\begin{align}
\frac{\partial }{\partial r} \lap k(\Ba,\Bb) \ &= \
d \ (2 + d) \ r\ (\epsilon^2 \ (2 + m) \ + \ (-2 - d + m) r^2)
 \ (\epsilon^2 + r^2)^{-3 - d/2}  \ > \ 0
\end{align}
and
\begin{align}
\frac{\partial }{\partial \epsilon} \lap k(\Ba,\Bb) \ &= \
d \ (2 + d) \ \epsilon
\ (\epsilon^2 m \ + \ (-4 - d + m) r^2)\
(\epsilon^2 + r^2)^{-3 - d/2} \ > \ 0\ .
\end{align}
Therefore, $\lap k(\Ba,\Bb)$ is negative with minimum
$- m d \epsilon^{-(d+2)}$ at $r=0$ and
increasing with $r$ and increasing with $\epsilon$ for $d\leq m-4$.
For $d=m-3$ we have to restrict in the following the sphere
$S_{\tau}(\Ba)$ to $\tau < \sqrt{m} \epsilon$ and
ensure increase of $\lap k(\Ba,\Bb)$ with $\epsilon$.

If $\rho(\Bb)\not=0$, then
we define a sphere $S_{\tau}(\Ba)$ with radius $\tau$ around $\Ba$ for
which holds $\sign(\rho(\Bb))=\sign(\rho(\Ba))$
for each $\Bb \in S_{\tau}(\Ba)$. Note that $\lap k(\Ba,\Bb)$ is
continuous differentiable.
We have
\begin{align}
\label{eq:lapK}
\di \EE(\Ba)  \ &= \
- \ \int \rho(\Bb) \ \lap_{a} k\left(\Ba,\Bb \right) \ \Rd\Bb
\ = \\ \nonumber
&- \ \int_{S_{\tau}(\Ba)} \rho(\Bb) \
\lap_{a} k\left(\Ba,\Bb \right) \ \Rd\Bb
\ - \ \int_{T\setminus S_{\tau}(\Ba)} \rho(\Bb) \
\lap_{a} k\left(\Ba,\Bb \right) \ \Rd\Bb \ .
\end{align}
We bound $\lap k(\Ba,\Bb)$ by
\begin{align}
0 \ > \ \lap k(\Ba,\Bb) \ &= \
d \ (- \ \epsilon^2 \ m \ + \ (2 \ + \ d \ - \ m) \ r^2) \ (\epsilon^2
\ + \ r^2)^{-2 - d/2}  \ > \
d \  (2 \ + \ d \ - \ m) \ r^{-2 - d}  \ .
\end{align}
Using $\tau$, we now bound
$\left|\int_{T\setminus S_{\tau}(\Ba)} \rho(\Bb) \ \lap_{a}
k\left(\Ba,\Bb \right) \ \Rd\Bb \right|$ independently from
$\epsilon$, since $\rho$ is a difference of distributions.
For small enough $\epsilon$ we can ensure
\begin{align}
\left|\int_{S_{\tau}(\Ba)} \rho(\Bb) \ \lap_{a} k\left(\Ba,\Bb \right)
  \ \Rd\Bb\right| \ &> \
\left|\int_{T\setminus S_{\tau}(\Ba)} \rho(\Bb) \ \lap_{a} k\left(\Ba,\Bb \right) \ \Rd\Bb \right|\ .
\end{align}
Therefore we have
\begin{align}
\sign (\di \EE(\Ba))  \ &= \ \sign( \rho(\Ba) ) \ .
\end{align}

Therefore we have
at each local minimum and local maximum $\Ba$ of $\rho$
\begin{align}
\sign(\dot{\rho}(\Ba)) \ &= \ - \ \sign(\rho(\Ba)) \ .
\end{align}
Therefore the maximal and minimal points of $\rho$ move toward zero.
Since $\rho$ is continuously differentiable as is the field,
also the points in an environment of the maximal and minimal points
move toward zero.
Points that are not in an environment of the maximal or minimal points
cannot become maximal points in an infinitesimal time step.

Since the contribution of $\Ba$ environment
$S_{\tau}(\Ba)$ dominates the integral
Eq.~\eqref{eq:lapK},
for $\epsilon$ small enough there exists a
positive $0<\lambda$ globally for all minima and maxima as well
as for all time steps for which holds:
\begin{align}
| \di \EE(\Ba) | \ &> \ \lambda \ | \rho(\Ba)| \ .
\end{align}
The factor $\lambda$ depends on $k$
and on the initial $\rho$.
$\lambda$ is proportional to $d$.
Larger $d$ lead to larger $| \di \EE(\Ba) |$ since the maximum or
minimum $\rho(\Ba)$ is upweighted.
There might exist initial conditions $\rho$ for which $\lambda \to 0$,
e.g.\ for infinite many maxima and minima, but they are impossible
in our applications.

Therefore maximal or minimal points approach zero faster or equal
than given by
\begin{align}
\dot{\rho}(\Ba) \ &= \
- \ \sign(\rho(\Ba)) \ \lambda \ \rho^2(\Ba)  \ .
\end{align}
In particular this differential equation dominates
the global maximum $|\rho|_{\max}$ of $|\rho(.)|$.
Solving the differential equation gives that at least
\begin{align}
|\rho|_{\max} (t) \ &= \
\frac{1}{\lambda \ t \ + \  \left(|\rho|_{\max}(0)\right)^{-1} } \ .
\end{align}
Thus $d$ influences the worst case rate of convergence, where larger $d$
with $d\leq m-2$ leads to faster worst case convergence.

Consequently, $\rho$ converges to the zero function over time, that
is,  $p_x(.)$ becomes equal to $p_y(.)$.
\end{proof}

\subsection{Proof of Theorem 2}
\label{sec:proof_th2}
We first recall Theorem~\ref{th:optimality}:
\begin{theorem*}[Optimal Solution]
If the pair $(D^*, G^*)$ is a local Nash equilibrium for the
Coulomb GAN objectives, then it is
the global Nash equilibrium, and no other local Nash equilibria exist,
and
$G^*$ has output distribution $p_x = p_y$.
\end{theorem*}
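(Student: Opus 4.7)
The plan is to read off what any local Nash $(D^*, G^*)$ must satisfy from the optimality conditions for discriminator and generator in function space, and then invoke the Laplacian sign identity established inside the proof of Theorem~\ref{th:convergence} to pin down the only possibility.

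First, the discriminator loss $\mathcal{L}_D(D;G^*) = \tfrac{1}{2}\,\EXP_{p_a}[(D(\Ba)-\hat\Phi(\Ba))^2]$ is a strictly convex quadratic functional in $D$, so any local minimizer over a function-space neighborhood $U(D^*)$ is the global minimizer and equals the conditional mean $D^*(\Ba) = \EXP_{\cX,\cY}[\hat\Phi(\Ba)] = \Phi^*(\Ba)$ on $\mathrm{supp}(p_a)$, where $\Phi^*$ denotes the exact low-dimensional Plummer potential induced by $\rho^* = p_y - p_x^*$. Since $p_a$ is a Gaussian-smoothed mixture of real and generator samples, this identity covers $\mathrm{supp}(p_x^*)$.

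Next, I would exploit the local optimality of $G^*$. Expand $\mathcal{L}_G(G^* + \eta\,\delta G; D^*)$ as a Taylor series in $\eta$, and take $\delta G$ concentrated (via a smooth bump) around an arbitrary preimage $\Bz_0$ of a point $\Ba_0 \in \mathrm{supp}(p_x^*)$. The first-order term then forces $\nab_\Ba D^*(\Ba_0) = \BZe$, i.e.\ the field $\EE^*$ vanishes on the generator support. The second-order term requires the Hessian of $D^*$ at $\Ba_0$ to be negative semi-definite; taking its trace (which equals the Laplacian) yields the pointwise scalar condition
\begin{equation*}
\lap \Phi^*(\Ba) \ \leq \ 0 \qquad \text{for all } \Ba \in \mathrm{supp}(p_x^*).
\end{equation*}

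The final step invokes the Laplacian sign analysis from the proof of Theorem~\ref{th:convergence}: for the low-dimensional Plummer kernel with $d \leq m-2$ and $\epsilon$ sufficiently small, the large negative spike of $\lap_\Ba k(\Ba,\Bb)$ at $\Bb = \Ba$ dominates the defining integral, so $\sign(\lap \Phi^*(\Ba)) = -\sign(\rho^*(\Ba))$ wherever $\rho^*(\Ba) \neq 0$. Combined with the previous display, this rules out $\rho^*(\Ba) < 0$ on $\mathrm{supp}(p_x^*)$, giving $\rho^* \geq 0$ there. Off $\mathrm{supp}(p_x^*)$ we have $p_x^* = 0$, so $\rho^* = p_y \geq 0$ trivially; hence $\rho^* \geq 0$ on all of $\dR^m$, and the mass constraint $\int \rho^*(\Ba)\,\Rd\Ba = 0$ forces $\rho^* \equiv 0$, that is $p_x^* = p_y$. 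This is precisely the unique global Nash equilibrium, so no other local equilibria can exist.

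The main obstacle I expect is making the function-space perturbation argument for $G^*$ rigorous. The paper itself warns that the map $G \mapsto p_x$ is discontinuous, so one cannot simply lift distribution-level perturbations to the generator; instead, one needs concrete localized $\delta G$ (e.g.\ smooth bumps in $\Bz$-space) and must verify that the resulting first- and second-order Taylor terms truly deliver the pointwise conditions $\EE^*(\Ba_0) = \BZe$ and Hessian $\preceq \BZe$. A secondary subtlety is that the sign identity from Theorem~\ref{th:convergence} carries the ``$\epsilon$ small enough'' caveat, which must be inherited as a standing hypothesis on the Plummer kernel for Theorem~\ref{th:optimality} to conclude $\rho^* \equiv 0$.
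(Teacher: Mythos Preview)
Your approach is correct and reaches the same conclusion, but it follows a genuinely different route from the paper's own proof.

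Both proofs share the first step: since $\mathcal{L}_D$ is a convex quadratic in $D$, local optimality forces $D^* = \Phi^*$ on the relevant support. After that the arguments diverge. The paper rewrites the generator loss $-\tfrac{1}{2}\EXP_{p_z}[\Phi^*(G(\Bz))]$ as the generator-dependent part of the energy functional $F(\rho)$ from Eq.~\eqref{eq:energy}, and then invokes Theorem~\ref{th:convergence} \emph{as a black box}: since the field dynamics drive $\rho$ to zero with no other fixed points, a local minimum of $F$ in generator space must already sit at $\rho^* \equiv 0$. Your argument instead stays at the level of pointwise variational conditions: first-order optimality gives $\EE^* = \BZe$ on $\mathrm{supp}(p_x^*)$, second-order optimality gives $\lap \Phi^* \leq 0$ there, and then you extract only the Laplacian sign identity from \emph{inside} the proof of Theorem~\ref{th:convergence} to rule out $\rho^* < 0$, finishing with the mass constraint.

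What each buys: the paper's route is shorter and leverages Theorem~\ref{th:convergence} wholesale, but it leans on an implicit identification of ``gradient descent in $G$-space'' with the continuity-equation dynamics of Theorem~\ref{th:convergence}, which is exactly the delicate point you flag (the map $G \mapsto p_x$ is not continuous). Your route is more elementary and makes the function-space optimality conditions explicit, at the cost of needing the localization/bump-function argument to pass from the integrated second variation to the pointwise Hessian bound. The two obstacles you identify --- rigorizing the $\delta G$ perturbations and inheriting the ``$\epsilon$ small enough'' hypothesis --- are real, but they are equally present (and equally unaddressed) in the paper's own proof, so your version is not weaker in that respect. One small remark: the sign identity in the proof of Theorem~\ref{th:convergence} is stated at local extrema of $\rho$, but the actual derivation there only uses $\rho(\Ba)\neq 0$ and continuity, so your extension to all such points is legitimate.
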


\begin{proof}
$(D^*, G^*)$ being in a local Nash equilibrium means that
$(D^*, G^*)$ fulfills the two conditions
\begin{align}
D^* = \argmin_{D \in U(D^*)}\mathcal{L}_D(D ; G^*) \quad \text{and} \quad
G^* = \argmin_{G \in U(G^*)}\mathcal{L}_G(G ; D^*)
\end{align}
for some neighborhoods $U(D^*)$ and $U(G^*)$.
For Coulomb GANs that means, $D^*$ has learned the potential $\Phi$ induced by $G^*$
perfectly, because $\mathcal{L}_D$ is
convex in $D$, thus if $D^*$ is optimal within an neighborhood $U(D^*)$, it must be the global optimum.
This means that $G^*$ is directly minimizing
$\mathcal{L}_G(G ; D) = -\frac{1}{2} \EXP_{p_z}\left(\Phi(G(\Bz))
\right)$.
The Coulomb potential energy is according to Eq.~\eqref{eq:energy}
\begin{align}
F\left( \rho \right) \ &= \ \frac{1}{2} \int \rho(\Ba) \Phi(\Ba)
\Rd\Ba \ = \ \frac{1}{2} \int p_y(\Ba) \Phi(\Ba)
\Rd\Ba \ - \ \frac{1}{2} \int p_x(\Ba) \Phi(\Ba)
\Rd\Ba \ .
\end{align}
Only the samples from $p_x$ stem from the generator, where
$p_x(\Ba)=\int \delta(\Ba-G(\Bz))p_z(\Bz)\Rd \Bz$.
Here $\delta$ is the $\delta$-distribution centered at zero.
The part of the energy which depends on the generator is
\begin{align}
&- \ \frac{1}{2} \ \int p_x(\Ba) \ \Phi(\Ba) \
\Rd\Ba \ = \ - \
\frac{1}{2} \ \int \ \int \delta(\Ba-G(\Bz)) \ p_z(\Bz) \ \Rd \Bz \ \Phi(\Ba)
\ \Rd\Ba \\ \nonumber
&= \ - \ \frac{1}{2} \ \int \ \big(\int \delta(\Ba-G(\Bz)) \ \Phi(\Ba)\
\Rd\Ba \big)  \ p_z(\Bz) \ \Rd \Bz   \\ \nonumber
&= \ - \ \frac{1}{2} \ \int p_z(\Bz) \ \Phi(G(\Bz)) \ \Rd \Bz \ = \
-\frac{1}{2} \ \EXP_{p_z}\left(\Phi(G(\Bz)) \right)\ .
\end{align}
Theorem~\ref{th:convergence}
guarantees that there are no other local minima except the global one
when minimizing $F$.
$F$ has one minimum, $F = 0$, which implies $\Phi(\Ba)=0$ and
$\rho(\Ba)=0$ for all $\Ba$, therefore also $p_y = p_x$
according to Theorem~\ref{th:convergence}. Each $\Phi(\Ba)\not=0$
would mean there exist potential differences which in turn
would cause forces on generator samples that allow to further
minimize the energy.
Since we assumed that the generator can reach the minimum $p_y = p_x$ for
any $p_y$, it will be reached by local (stepwise) optimization of
$-\frac{1}{2} \EXP_{p_z}\left(\Phi(G(\Bz)) \right)$ with respect to $G$.
Since the pair $(D^*, G^*)$ is optimal within their neighborhood, the
generator has reached this minimum as there is not other local minimum
than the global one.
Therefore $G^*$ has model density $p_x$ with $p_y = p_x$.
The convergence point is a global Nash equilibrium,
because there is no approximation error and zero energy $F = 0$
is a global minimum for discriminator and generator, respectively.
Theorem~\ref{th:convergence} ensures that other local Nash equilibria
are not possible.
\end{proof}

\subsection{Coulomb Equations in the case of Finite Samples}
\label{sec:samplebased-equations}
GANs are sample-based, that is, samples
are drawn from the model for learning \citep{Hochreiter:05kernel,Gutmann:12}.
Typically this is done in mini-batches, where each mini-batch
consists of two sets of samples, the target samples $\mathcal{Y} = \{\By_i | i = 1 \ldots N_y\}$,
and the model samples
$\mathcal{X} = \{\Bx_i | i = 1 \ldots N_x\}$.

For such finite samples, i.e. point charges, we have to use delta
distributions to obtain unbiased estimates of the the model distribution $p_x(.)$ and the
target distribution $p_y(.)$:
\begin{small}
\begin{align}
\hat{p}_y(\Ba; \cY) &= \frac{1}{N_y} \sum_{i=1}^{N_y}
\delta\left(\Ba - \By_i \right) \ , \quad
\hat{p}_x(\Ba; \cX) = \frac{1}{N_x} \sum_{i=1}^{N_x}
\delta\left(\Ba - \Bx_i \right) \ , \quad
\hat{\rho}(\Ba; \cX, \cY) \ &= \ p_y(\Ba; \cY)\ - \ p_x(\Ba; \cX) \ ,
\end{align}
\end{small}
where $\delta$ is the Dirac $\delta$-distribution centered at zero.
These are unbiased estimates of the underlying distribution, as can be seen by:
\begin{small}
\begin{align}
\EXP_{\cX} \left(\frac{1}{N_x} \sum_{i=1}^{N_x} \delta(\Ba-\Bx_i) \right) \ = \
\frac{1}{N_x} \sum_{i=1}^{N_x} \EXP_{x_i}\left(\delta(\Ba-\Bx_i) \right) \ = \
\frac{1}{N_x} \sum_{i=1}^{N_x} p_x(\Ba) \ = \ p_x(\Ba) \ .
\end{align}
\end{small}
In the rest of the paper, we will drop the explicit parameterization with $\cX$ and $\cY$
for all estimates to unclutter notation, and instead just use the hat sign to denote estimates.
In the same fashion as for the distributions, when we use
fixed samples $\cX$ and $\cY$, we obtain the following unbiased
estimates for the potential, energy and field given by Eq.~\eqref{eq:Plummer}, Eq.~\eqref{eq:field}, and Eq.~\eqref{eq:energy}:
\begin{small}
\begin{align}
\label{eq:objective1}
\hat{\Phi}(\Ba) \ &= \
\frac{1}{N_y} \sum_{i=1}^{N_y} k\left(\Ba,\By_i \right) \ - \
\frac{1}{N_x} \sum_{i=1}^{N_x} k\left(\Ba,\Bx_i \right) \ , \\
\label{eq:objectiveF}
\hat{F}\left(\rho\right) \ &= \ \frac{1}{2}
\ \left(
\frac{1}{N_y^2} \sum_{i=1}^{N_y} \sum_{j=1}^{N_y}
k \left( \By_i,\By_j \right)
\ - \
\frac{2}{N_y N_x} \sum_{i=1}^{N_y} \sum_{j=1}^{N_x}
k \left( \By_i,\Bx_j \right)
\ + \
\frac{1}{N_x^2} \sum_{i=1}^{N_x} \sum_{j=1}^{N_x}
k \left( \Bx_i,\Bx_j \right) \right) \\ \nonumber
&= \ \frac{1}{2}
\ \left( \frac{1}{N_y} \sum_{i=1}^{N_y} \hat{\Phi}\left(\By_i\right)
\ - \ \frac{1}{N_x} \sum_{i=1}^{N_x} \hat{\Phi}\left(\Bx_i\right)
  \right) \ , \\
\label{eq:objective4}
\hat{\EE}(\Ba)
\ &= \ - \ \nab_{a} \ \hat{\Phi}(\Ba) \
\ = \ - \ \frac{1}{N_y} \sum_{i=1}^{N_y} \nab_{a} k\left(\Ba,\By_i \right) \ + \
\ \frac{1}{N_x} \sum_{i=1}^{N_x} \nab_{a} k\left(\Ba,\Bx_i \right) \\
\nonumber
\hat{\EE}(\By_i)  \ &= \ - \ N_y \ \nab_{y_i} \hat{F}\left(\rho\right) \ , \quad
\hat{\EE}(\Bx_i)  \ = \ N_x \ \nab_{x_i} \hat{F}\left(\rho\right) \ .
\end{align}
\end{small}
These are again unbiased, e.g.:
\begin{align}
\EXP_{\cX,\cY} (\hat{\Phi}(\Ba)) \ &= \
   \frac{1}{N_y} \ \sum_{i=1}^{N_y}  E_{\By_i}\left(k(\Ba,\By_i)\right)
\ - \ \frac{1}{N_x} \ \sum_{i=1}^{N_x}  E_{\Bx_i}\left(k(\Ba,\Bx_i)\right)
\\ \nonumber
&= \ \frac{1}{N_y} \ \sum_{i=1}^{N_y} \int p_y(\By_i) \ k(\Ba,\By_i) \ \Rd \By_i
\ -\ \frac{1}{N_x} \ \sum_{i=1}^{N_x} \int p_x(\Bx_i) \ k(\Ba,\Bx_i) \ \Rd \Bx_i \\ \nonumber
&= \ \int p_y(\By) \ k(\Ba,\By) \ \Rd \By
\ - \ \int p_x(\Bx) \ k(\Ba,\Bx) \ \Rd \Bx \\ \nonumber
&= \
   \int \left(p_y(\Bx) \ - \ p_x(\Bx)\right) \ k(\Ba,\Bx) \ \Rd \Bx 
\ = \
   \int \rho(\Bx) \ k(\Ba,\Bx) \ \Rd \Bx \ = \ \Phi(\Ba)
\end{align}
If we draw samples of infinite size, all these expressions for a fixed sample
size lead to the equivalent statements for densities.
The sample-based formulation, that is, point charges in physical
terms, can only have local energy minima or maxima
at locations of samples \citep{Dembo:88}.
Furthermore the field lines originate and end at samples, therefore
the field guides model samples $\Bx$ toward real
world samples $\By$, as depicted in Figure~\ref{fig:field}.
The factors $N_y$ and $N_x$ in the last equations arise from the fact
that $-\nab_{a} F$ gives the force which is applied to a sample
with charge.
A sample $\By_i$ is positively charged with $1/N_y$ and follows
$- \nab_{y_i} F$ while a sample $\Bx_i$ is negatively charged
with $-1/N_x$ and therefore follows $- \nab_{x_i} F$, too.
Thus, following the force induced on
a sample by the field is equivalent to gradient descent of the energy
$F$ with respect to samples $\By_i$ and $\Bx_i$.

\subsection{Mixture of Gaussians}
\label{sec:gaussianmixture}
We use the synthetic data set introduced by
\citet{Lim:17} to show that
Coulomb GANs avoid mode collapse
and that all modes of the target distribution are captured by the
generative model.
This data set comprises 100K data points drawn from a Gaussian mixture
model of 25 components which are spread out evenly in the range
$[-21, 21]\times [-21, 21]$, with each component having a variance of 1.
To make results comparable with \citet{Lim:17}, the Coulomb GAN used a
discriminator network with 2 hidden layers of 128 units,
however we avoided batch normalization by using the ELU activation
function \citep{Clevert:16elu}.
We used the Plummer kernel in 3 dimensions ($d=3$)
with an epsilon of 3 ($\epsilon=3$) and
a learning rate of 0.01, both of which were exponentially decayed during the 1M
update steps of the Adam optimizer.

As can be seen in Figure~\ref{fig:gaussianmixture}, samples from the learned Coulomb GAN
very well approximate the target distribution.
All components of the original distribution are present at the model
distribution at approximately the
correct ratio, as shown in Figure~\ref{fig:gm2d-hist}.
Moreover, the generated samples
are distributed approximately according to
the same spread for each component of the real world distribution.
Coulomb GANs outperform
other compared methods, which either fail to learn the distribution completely,
ignore some of the modes, or do not capture the within-mode spread
of a Gaussian. The Coulomb GAN is the
only GAN approach that manages to avoid a within-cluster
collapse leading to insufficient variance within a cluster.

\begin{figure}[hb]
\centering
\begin{subfigure}[t]{0.195\textwidth}
    \centering
    \includegraphics[width=\textwidth]{./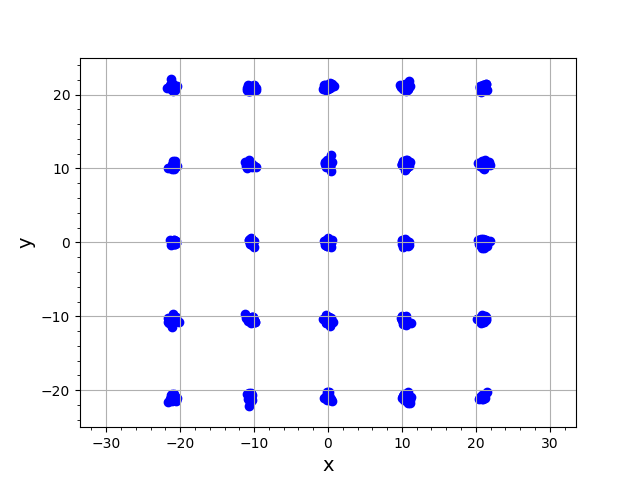}
    \caption{True data}
\end{subfigure}
\begin{subfigure}[t]{0.195\textwidth}
    \centering
    \includegraphics[width=\textwidth]{./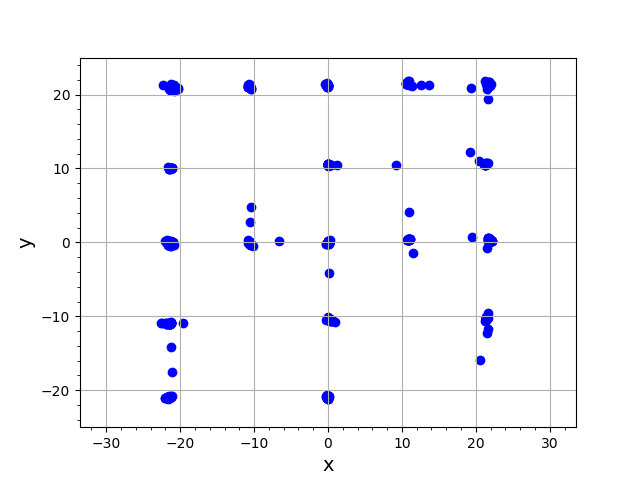}
    \caption{GAN}
\end{subfigure}
\begin{subfigure}[t]{0.195\textwidth}
    \centering
    \includegraphics[width=\textwidth]{./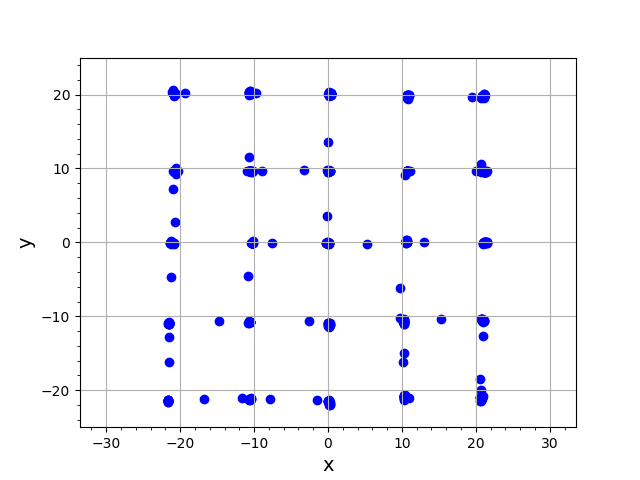}
    \caption{Geometric GAN}
\end{subfigure}
\begin{subfigure}[t]{0.195\textwidth}
    \centering
    \includegraphics[width=\textwidth]{./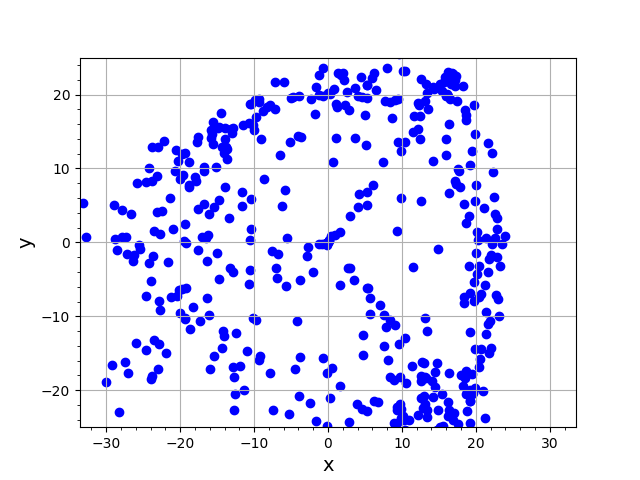}
    \caption{WGAN}
\end{subfigure}
\begin{subfigure}[t]{0.195\textwidth}
    \centering
    \includegraphics[width=\textwidth]{./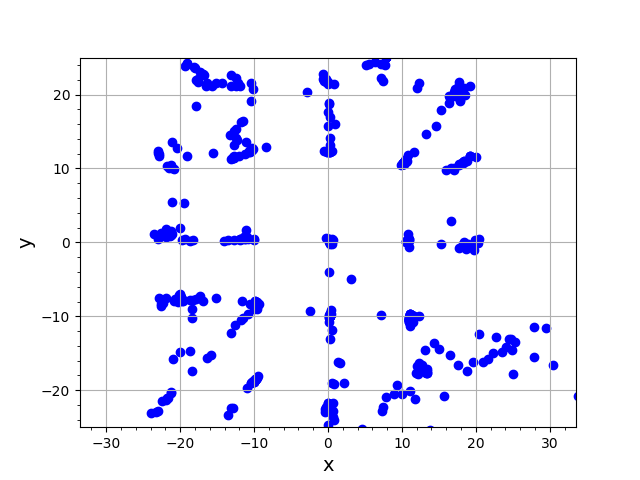}
    \caption{meanGAN + proj.}
\end{subfigure}
\begin{subfigure}[t]{0.195\textwidth}
    \centering
    \includegraphics[width=\textwidth]{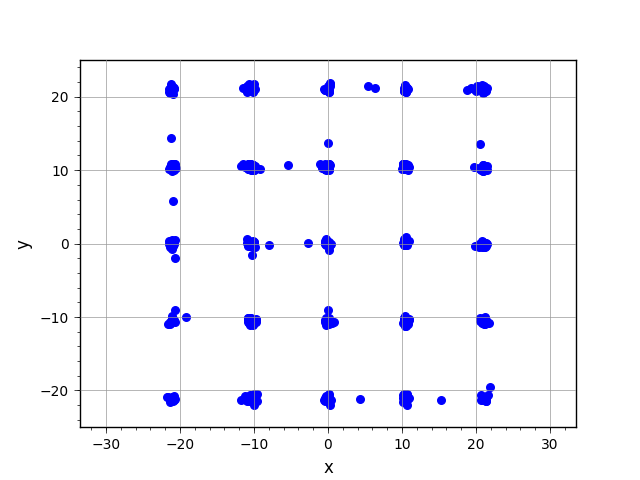}
    \caption{Coulomb GAN}
\end{subfigure}
\begin{subfigure}[t]{0.195\textwidth}
    \centering
    \includegraphics[width=\textwidth]{./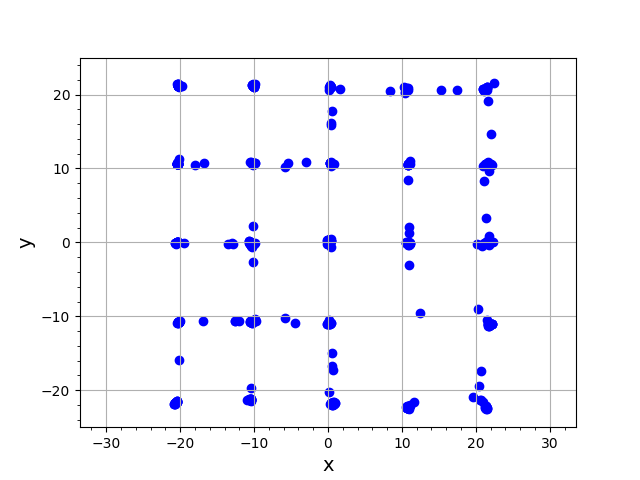}
    \caption{GAN + WD}
\end{subfigure}
\begin{subfigure}[t]{0.195\textwidth}
    \centering
    \includegraphics[width=\textwidth]{./fake_samples_toy4_l2wgan_wproj_toy4_rmsprop_1.png}
    \caption{Geo. GAN + WD}
\end{subfigure}
\begin{subfigure}[t]{0.195\textwidth}
    \centering
    \includegraphics[width=\textwidth]{./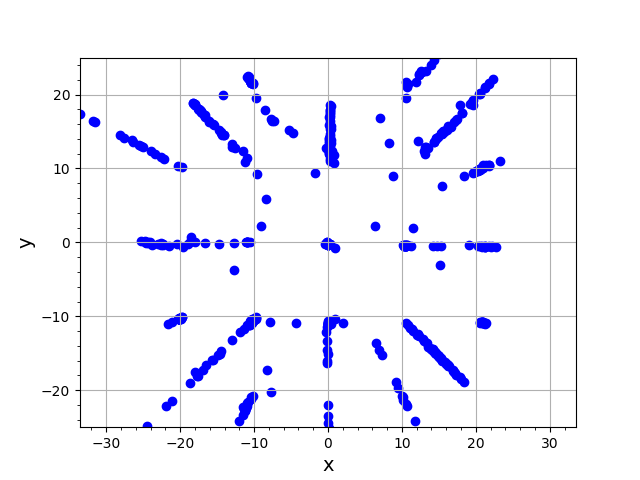}
    \caption{WGAN + WD}
\end{subfigure}
\begin{subfigure}[t]{0.195\textwidth}
    \centering
    \includegraphics[width=\textwidth]{./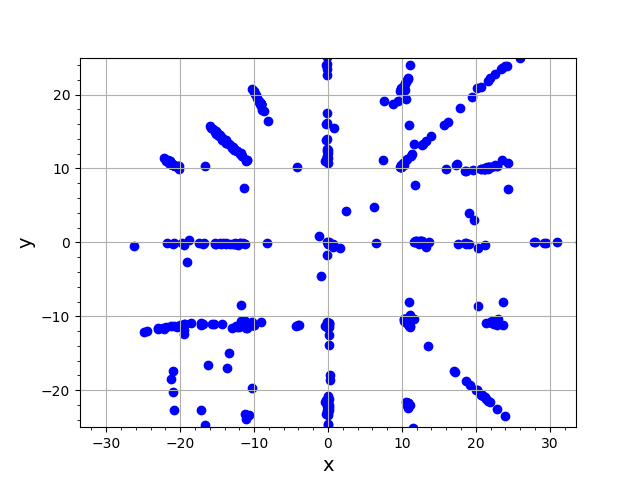}
    \caption{meanGAN + WD}
\end{subfigure}
\caption{Scatter plots of generated samples from
  different GAN variants for the mixture of 25 Gaussians and the true
  data distribution. ``WD'' indicates weight decay and ``proj.'' means
  projection. Results and images for all methods except the Coulomb GAN are taken
  from \citet{Lim:17}.}
\label{fig:gaussianmixture}
\end{figure}
\begin{figure}[hb]
\centering
    \includegraphics[width=0.8\textwidth]{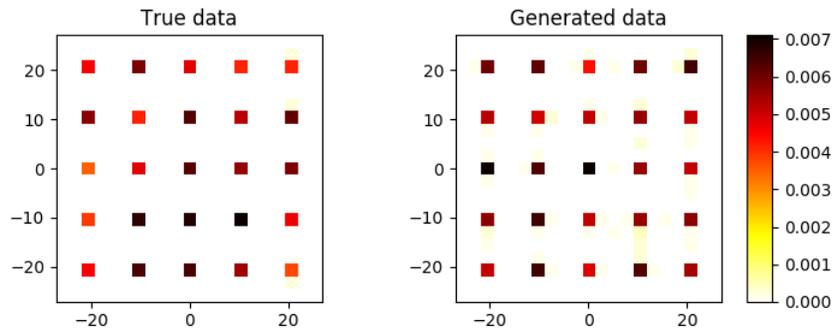}
\caption{
  2D histogram of the density of generated and the training st
  data for the mixture of 25 Gaussians.
  For constructing the histogram,
  10k samples were drawn from the target and the model distribution.
  The Coulomb GAN captures the underlying distribution well,
  does not miss any modes, and places almost all probability mass on
  the modes. Only the Coulomb GAN captured the within-mode spread
  of the Gaussians.}
\label{fig:gm2d-hist}
\end{figure}

\clearpage

\subsection{Pseudocode for Coulomb GANs}
The following gives the pseudo code for training GANs.
Note that when calculating the derivative of $\hat{\Phi}(\Ba_i; \mathcal{X, Y})$, it is
important to only derive with respect to $\Ba$, and not wrt. $\mathcal{X, Y}$,
even if it can happen that e.g. $\Ba \in \mathcal{X}$. In frameworks that offer
automatic differentiation such as Tensorflow or Theano, this means stopping
the possible gradient back-propagation through those parameters.

\begin{algorithm}[h]
\caption{\small Minibatch stochastic gradient descent training of Coulomb GANs
for updating the the discriminator weights $\Bw$ and the generator weights $\Bth$.}
\begin{algorithmic}
\label{alg:coulombgan}
\WHILE{Stopping criterion not met}{
    \STATE{$\bullet$ Sample minibatch of $N_x$ training samples $\{ \Bx_1, \dots, \Bx_{N_x} \}$ from training set}
    \STATE{$\bullet$ Sample minibatch of $N_y$ generator samples $\{ \By_1, \dots, \By_{N_y} \}$ from the generator}
    \STATE{$\bullet$ Calculate the gradient for the discriminator weights: \begin{small}
        \[
            d\Bw \gets \nabla_{\Bw} \left[
            \frac{1}{2} \sum_{i=1}^{N_x} \left(D(\Bx_i) - \hat{\Phi}(\Bx_i)\right)^2 +
            \frac{1}{2} \sum_{i=1}^{N_y} \left(D(\By_i) - \hat{\Phi}(\By_i)\right)^2
            \right]
        \]\end{small} }
    \STATE{$\bullet$ Calculate the gradient for the generator weights:
        \[ \begin{small}
            d\Bth \gets \nabla_{\Bth}\left[ -\frac{1}{2} \frac{1}{N_x} \sum_{i=1}^{N_x} D\left(\Bx_i\right)\right]
           \end{small}
        \]}
        }
    \STATE{$\bullet$ Update weights according to optimizer rule (e.g. Adam):
        \begin{small}
        \begin{align*}
        \Bw_{n+1} &= \Bw_n + \texttt{ADAM}(d\Bw, n) \\
        \Bth_{n+1} &= \Bth_n +\texttt{ADAM}(d\Bth, n)
        \end{align*}
        \end{small}}

\ENDWHILE
\end{algorithmic}
\end{algorithm}

\clearpage

\subsection{More Samples from Coulomb GANs}
\label{sec:more-examples}
\subsubsection*{CelebA}
Images from a Coulomb GAN after training on CelebA data set.
The low FID stems from the fact that the images show a wide variety
of different faces, backgrounds, eye colors and orientations.
\begin{center}
\includegraphics[width=0.75\textwidth]{./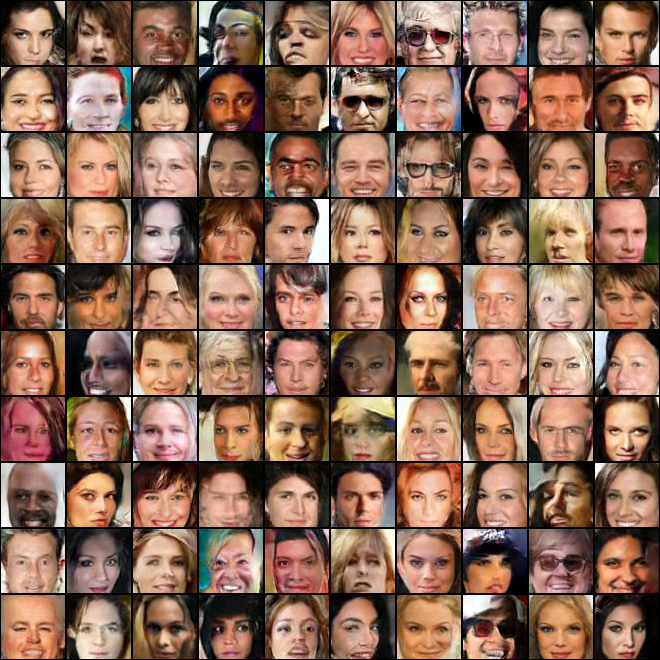}
\end{center}

\clearpage

\subsubsection*{LSUN bedrooms}
Images from a Coulomb GAN after training on the LSUN
      bedroom data set.
\begin{center}
\includegraphics[width=0.75\textwidth]{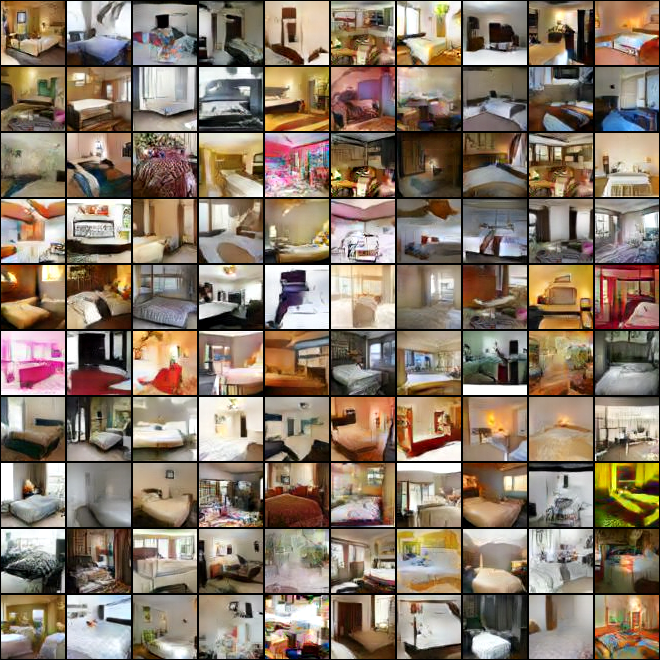}
\end{center}

\subsubsection*{CIFAR 10}
Images from a Coulomb GAN after training on the CIFAR 10 data set.
\begin{center}
    \includegraphics[width=0.75\textwidth]{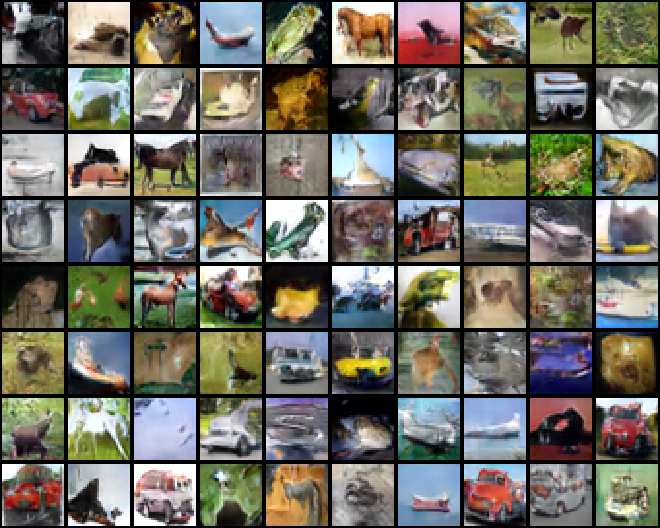}
\end{center}

\end{document}